\def\eqref#1{equation~\ref{#1}}
\def\1{\bm{1}}
\def\vd{{\bm{d}}}
\def\vf{{\bm{f}}}
\def\vm{{\bm{m}}}
\def\vs{{\bm{s}}}
\def\vw{{\bm{w}}}
\def\vx{{\bm{x}}}
\def\vy{{\bm{y}}}
\def\vz{{\bm{z}}}
\def\mI{{\bm{I}}}
\def\mM{{\bm{M}}}
\DeclareMathAlphabet{\mathsfit}{\encodingdefault}{\sfdefault}{m}{sl}
\SetMathAlphabet{\mathsfit}{bold}{\encodingdefault}{\sfdefault}{bx}{n}
\def\gG{{\mathcal{G}}}
\def\gI{{\mathcal{I}}}
\def\gN{{\mathcal{N}}}
\def\gO{{\mathcal{O}}}
\def\gP{{\mathcal{P}}}
\newcommand{\E}{\mathbb{E}}
\newcommand{\KL}{D_{\mathrm{KL}}}
\renewcommand{\mid}{\,|\,}
\newcommand{\GP}{\gG\gP}
\newcommand{\vS}{\bm{\Sigma}}
\theoremstyle{plain}
\newtheorem{theorem}{Theorem}[section]
\theoremstyle{definition}
\theoremstyle{remark}
\newlength\savewidth\newcommand\shline{\noalign{\global\savewidth\arrayrulewidth
  \global\arrayrulewidth 1pt}\hline\noalign{\global\arrayrulewidth\savewidth}}
\icmltitlerunning{Differentiable AIS Minimizes The
Symmetrized KL Divergence Between Initial and Target Distribution}
\begin{document}

\twocolumn[
\icmltitle{
Differentiable Annealed Importance Sampling Minimizes The Symmetrized\\
Kullback-Leibler Divergence Between Initial and Target Distribution
}

\begin{icmlauthorlist}
\icmlauthor{Johannes Zenn}{TAIC,UT,IMPRS}
\icmlauthor{Robert Bamler}{TAIC,UT}
\end{icmlauthorlist}

\icmlaffiliation{UT}{University of Tübingen}
\icmlaffiliation{TAIC}{Tübingen AI Center}
\icmlaffiliation{IMPRS}{IMPRS-IS}

\icmlcorrespondingauthor{Johannes Zenn}{johannes.zenn@uni-tuebingen.de}

\icmlkeywords{Machine Learning, probabilistic methods, approximate Bayesian inference, annealed importance sampling, divergences, variational, inference, ICML}

\vskip 0.3in
]



\printAffiliationsAndNotice{}  

\begin{abstract}
Differentiable annealed importance sampling (DAIS), proposed by \citet{geffner2021mcmc} and \citet{zhang2021differentiable}, allows optimizing over the initial distribution of AIS.
In this paper, we show that, in the limit of many transitions, DAIS minimizes the symmetrized Kullback-Leibler divergence between the initial and target distribution.
Thus, DAIS can be seen as a form of variational inference (VI) as its initial distribution is a parametric fit to an intractable target distribution.
We empirically evaluate the usefulness of the initial distribution as a variational distribution on synthetic and real-world data, observing that it often provides more accurate uncertainty estimates than VI (optimizing the reverse KL divergence), importance weighted VI, and Markovian score climbing (optimizing the forward KL divergence).
\end{abstract}

\section{Introduction}

Annealed importance sampling~(AIS) \citep{neal2001annealed} allows estimating the normalization constant $Z:={\int\! f(\vz)\,\mathrm{d}\vz}$ of an unnormalized distribution~$f(\vz)$.
In probabilistic machine learning, AIS can be used for Bayesian model selection \citep{knuth2015bayesian}, where the goal is to
maximize the marginal likelihood $p(\vx) = {\int\! p(\vz,\vx)\,\mathrm{d}\vz}$ over a family of candidate probabilistic models~$p$.
Here, $p(\vz,\vx)$ is the model's joint distribution over latent variables~$\vz$ and (fixed) observed data~$\vx$.
In this paper, we investigate the initial distribution of AIS or, more specifically, \emph{differentiable} AIS (DAIS) \citep{geffner2021mcmc,zhang2021differentiable}.
DAIS combines aspects of Markov Chain Monte Carlo (MCMC) and Variational Inference (VI):
it draws samples from a (variational) initial distribution~$q_0$ and then follows MCMC dynamics to move the samples towards a target distribution.

We show theoretically that, in the limit of many transitions, DAIS minimizes the symmetrized Kullback-Leibler (KL) divergence\footnote{
In a previous version, we used the term Jensen-Shannon divergence.
To be precise, we have updated our terminology.
} \citep{kullback1951information} between its initial and target distribution.
The symmetrized KL divergence is the sum of the reverse and the forward KL divergence.
While the reverse KL divergence, used in VI, is known to be \textit{mode-seeking}, the forward KL divergence, used in Markovian Score Climbing (MSC) \citep{naesseth2020markovian}, is associated with a \textit{mass-covering} behavior \citep{pml2Book}. The symmetrized KL divergence averages between both.

We empirically analyze implications of this theoretical result by asking: ``How useful is the initial distribution~$q_0$ of DAIS as a parametric approximation of the target distribution?''
This question corresponds to an inference scheme (which we denote by $\text{DAIS}_0$) that is identical to DAIS at training time (i.e., it moves particles sampled from~$q_0$ along an MCMC chain).
At inference time, however, $\text{DAIS}_0$ only uses~$q_0$, omitting expensive AIS steps.
Here, we refer to fitting the model parameters as ``training'' and running the model on test data as ``inference''.
Generally, we do not expect $\text{DAIS}_0$ to match the performance of DAIS.
However, $\text{DAIS}_0$ has the advantage of providing an \textit{explicit} and \textit{compact} representation of the approximate target distribution.

We call a distribution \textit{explicit} if it has an analytic expression (as opposed to, e.g., an algorithmic prescription for sampling from it).
It is often stated that an explicit expression for the exact Bayesian posterior $p(\vz\mid\vx) = p(\vz,\vx)/p(\vx)$ is intractable in large models, since calculating $p(\vx)$ is prohibitively expensive.
However, this adage misses an important point.
Even if we had an oracle that told us the value of $p(\vx)$, the explicit expression for the exact posterior would typically be far too complicated to be of any practical use in downstream tasks because it would, in general, have one term per data point.
We argue that we are actually interested in a \textit{compact} approximation of the posterior distribution, i.e., a tractable parameterized distribution that one can efficiently evaluate and sample from.
Having a compact approximate posterior enables various downstream applications, such as continual learning \citep{nguyen2018variational}, pruning in BNNs \citep{xiao2023compact}, and compression of neural networks \citep{tan2022posttraining} or other data \citep{yang2020variational}.

Starting from our theoretical contribution---showing that DAIS minimizes the symmetrized KL divergence between $q_0$ and ${f\,/\,Z}$---we provide an empirical analysis of $\text{DAIS}_0$ for approximate Bayesian inference.
We compare $\text{DAIS}_0$, DAIS, VI, importance weighted VI (IWVI), and MSC.
We find that $\text{DAIS}_0$ often provides more accurate uncertainty estimates than VI, IWVI, and MSC on Gaussian process regression and real-world datasets for logistic regression.

\section{Related Work} \label{sec:related-work}

In the following, we discuss related work on variational distributions that are augmented by MCMC transitions, the forward and reverse KL divergence, and DAIS.
We cover related work on methods to estimate normalization constants in \Cref{sec:background}.
Related work on bounds with respect to the symmetrized KL divergence is given in  \Cref{sec:method}.

\paragraph{MCMC-Augmented Variational Distributions.}
Sequential Monte Carlo samplers (SMCS) \citep{del2006sequential} are methods derived from prarticle filters \citep{doucet2001introduction} to estimate normalization constants. 
While SMCS and annealed importance sampling (AIS) both describe the same mathematical framework, SMCS typically integrate a resampling step that lets the model focus on ``important'' particles. 
AIS can be seen as a finite-difference approximation to thermodynamic integration (TI) \citep{gelman1998simulating} which computes ratios of normalization constants by solving a one-dimensional integration problem.
\citet{masrani2019thermodynamic} connect TI and variational inference (VI) resulting in tighter variational bounds by using importance sampling to compute the integral.
\citet{thin2021monte} propose a variant of the algorithm based on sequential importance sampling.
Hamiltonian variational inference combines variational inference and MCMC iterations differentiably by augmenting the variational distribution with MCMC steps \citep{salimans2015markov,wolf2016variational}.
The Hamiltonian VAE \citep{caterini2018hamiltonian} builds on Hamiltonian Importance Sampling \citep{neal2005hamiltonian} and improves on HVI by using optimally chosen reverse MCMC kernels.
$\text{DAIS}_0$, investigated here, uses an MCMC-augmented variational distribution during training but not during inference.

\paragraph{VI With Forward and Reverse KL Divergence.}
Black box VI \citep{ranganath2014black} is typically associated with the reverse KL divergence (between the variational distribution and the real posterior distribution).
However, also various other divergences have been investigated to obtain an approximation of the posterior distribution \citep{hernandez2016black,li2016renyi,dieng2017variational,wan2020f}.
Most related to $\text{DAIS}_0$, \citet{ruiz2019contrastive} refine the variational distribution by running MCMC transitions.
They minimize a contrastive divergence which, in the limit, converges to the symmetrized KL divergence.
Markovian score climbing (MSC) \citep{naesseth2020markovian}, that we also compare to in \Cref{sec:experiments}, optimizes the forward KL divergence using unbiased stochastic gradients.
MSC samples a Markov chain and uses the samples to follow the score of the variational distribution.
The Markov kernel for the MCMC dynamics is based on importance sampling.

\paragraph{Differentiable Annealed Importance Sampling (DAIS).}
\citet{geffner2021mcmc} and \citet{zhang2021differentiable} propose DAIS concurrently focusing on different aspects of the model (empirical results and a convergence analysis for linear regression, respectively).
\citet{doucet2022scorebased} identify that the backward transitions of AIS are conveniently chosen but suboptimal.
They propose Monte Carlo diffusion (MCD) which parameterizes the time reversal of the forward dynamics and which can be learned by maximizing a lower bound on the marginal likelihood, or equivalently, a denoising score matching loss.
\citet{geffner2023langevin} provide a more general framework for MCD and investigate various dynamics and numerical simulation schemes.
\citet{zenn2023resampling} extend DAIS to a sequential Monte Carlo sampler and provide a theoretical argument for leaving out the gradients corresponding to the resampling operation.

\section{Estimating Normalization Constants} \label{sec:background}

\begin{figure}[t]
    \centering
    \includegraphics[width=0.7\linewidth]{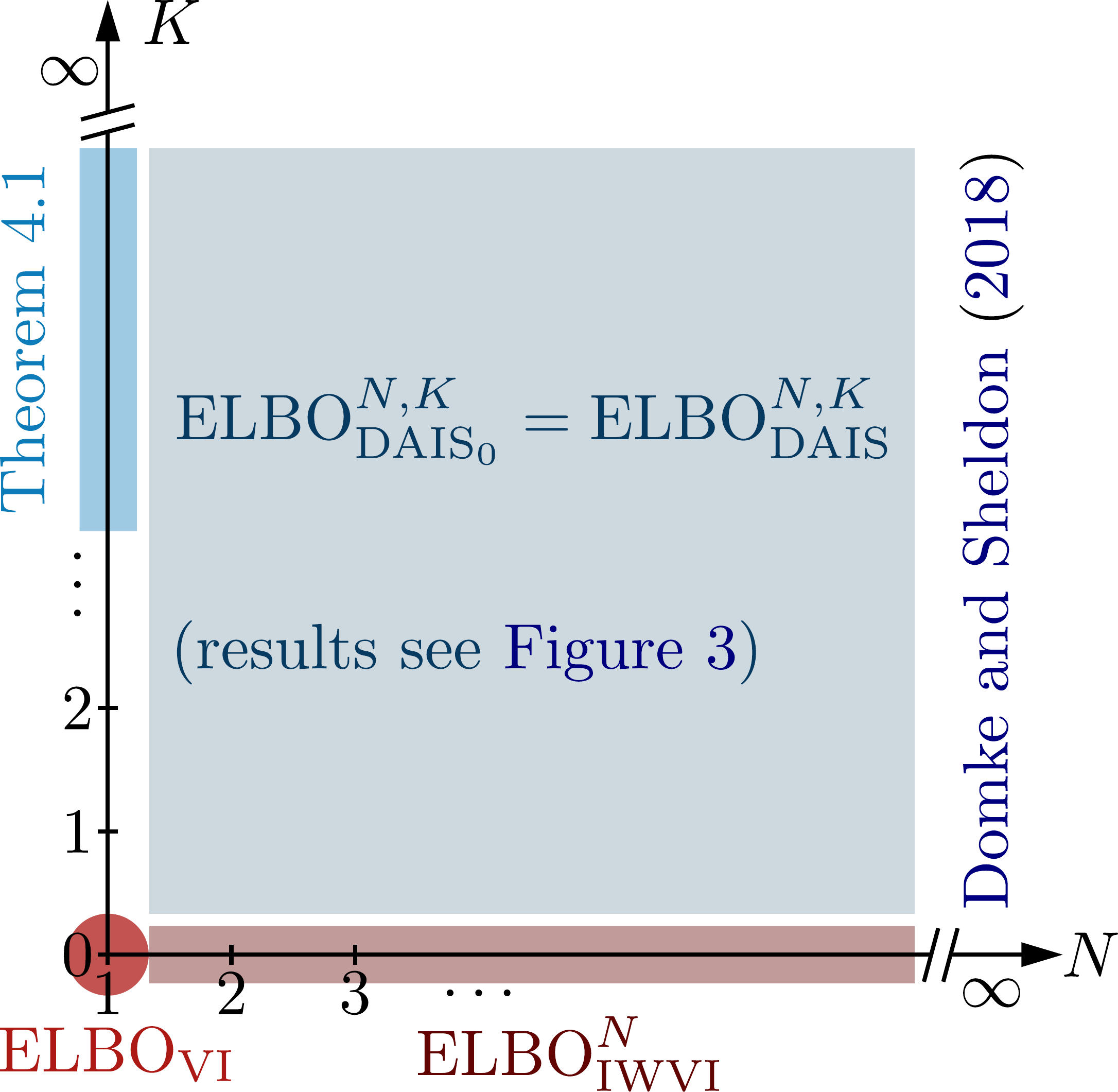}
    \caption{
    The landscape spanned by various lower bounds to the normalization constant $Z={\int\!f(\vz)\,\mathrm{d}\vz}$ where $N$ denotes number of particles and $K$ the number of importance sampling transitions.
    A discussion and further details can be found in \Cref{sec:background}.
    }
    \label{fig:n-k-d-overview}
\end{figure}

In this section, we discuss how variational inference (VI), importance weighted VI (IWVI), and (differentiable) AIS ((D)AIS) can be understood as approaches to reducing the variance of importance sampling (IS).
\Cref{fig:n-k-d-overview} summarizes the space spanned by these methods, highlights a limiting behavior \citep{domke2018importance}, and shows how our theoretical result (\Cref{thm:main}) fits into this unifying framework. 
\Cref{sec:vi} and \Cref{sec:iw-elbo} give an overview over (IW)VI.
\Cref{sec:dais} then introduces our notation for (D)AIS.

\paragraph{Importance Sampling} is a principal technique for estimating the integral over a (nonnegative) function~$f$ by sampling from a normalized proposal distribution~$q$,
\begin{align}\label{eq:is}
    Z \coloneqq
    \int\! f(\vz)\,\text{d}\vz = 
    \E_{\vz \sim q}
    \left[
        \frac{f(\vz)}{q(\vz)}
    \right],
\end{align}
where we assume that the support of~$q$ contains the support of~$f$.
We need to be able to efficiently draw samples (or \emph{particles}) $\vz$ from~$q$, and to evaluate $f(\vz)$ and $q(\vz)$ for these samples.
How many samples are necessary to obtain an accurate estimate of~$Z$ depends on how well $q$ approximates ${f\,/\,Z}$.
In high dimensions, the mismatch between~${f\,/\,Z}$ and any simple proposal distribution~$q$ grows, which leads to high variance of the importance weight $w(\vz) \coloneqq f(\vz)\,/\,q(\vz)$, and the number of samples has to grow exponentially in the dimension \citep{agapiou2017importance}.
We now discuss how VI, IWVI, and AIS all aim to reduce this variance.

\subsection{Variational Inference} \label{sec:vi}

Variational inference (VI) applies the logarithm to the importance weights $w(\vz) \coloneqq f(\vz)\,/\,q(\vz)$ in \Cref{eq:is}.
This reduces exponentially growing variances to linearly growing ones, but it introduces bias, resulting in a lower bound on $\log Z$ by Jensen's inequality \citep{blei2017variational},
\begin{align} \label{eq:elbo}
    \begin{split}
    \log Z
    &\geq
    \E_{\vz \sim q}
    \left[
        \log\left(
            \frac{f(\vz)}{q(\vz)}
        \right)
    \right]
    \eqqcolon
    \mathrm{ELBO}_{\text{VI}}(f, q).
    \end{split}
\end{align}

\Cref{eq:elbo} is called the evidence lower bound (ELBO) since VI is often used to estimate the evidence $\log p(\vx) = {\log\int\! p(\vz,\vx)\,\mathrm{d}\vz}$ of a probabilistic model $p(\vz, \vx)$ (latent variables~$\vz$ and observed data~$\vx$).
VI maximizes the ELBO over parameters of~$q$, leading to the best approximation of $\log p(\vx)$, which is useful for approximate Bayesian model selection \citep{beal2003variational,kingma2013auto}.
In addition, VI is a popular method for approximate Bayesian inference as the distribution~$q$ that maximizes the ELBO also approximates the (intractable) posterior $p(\vz\mid\vx) = p(\vz,\vx)\,/\,Z$.
This is because maximizing the ELBO over parameters of~$q$ is equivalent to minimizing the gap $\Delta_\text{VI} := \log Z - \mathrm{ELBO}_\text{VI}(f,q) \geq 0$, which turns out to be the KL divergence from the true posterior to~$q$, i.e., $\Delta_\text{VI} = \KL(q(\vz)\,\|\, p(\vz\mid\vx))$ \citep{blei2017variational}.

As we discuss next, IWVI and AIS can both be seen as methods that reduce the gap of VI by reducing the sampling variance.
AIS is typically considered a sampling method, prioritizing the quality of samples or an accurate estimate of~$Z$ over a good approximate posterior distribution~$q$, which would be the perspective of VI.
In \Cref{sec:method}, we analyze (differentiable) AIS from the perspective of VI.

\subsection{Importance Weighted Variational Inference} \label{sec:iw-elbo}

Importance weighted variational inference (IWVI) \citep{burda2016importance,domke2018importance} reduces the variance of the importance weight $w(\vz)$ by averaging $N$~independent samples from~$q$, where $N\geq 2$, i.e., it uses the weights
\begin{align}\label{eq:weight-IWVI}
    w_\text{IWVI}^N\bigl(\vz^{(1:N)}\bigr)
    \coloneqq
    \frac1N \sum_{i=1}^N \frac{f(\vz^{(i)})}{q(\vz^{(i)})}
\end{align}
where ${\vz^{(i)}\sim q}$ for all~$i$.
The resulting bound,
\begin{align} \label{eq:iw-elbo}
    \begin{split}
    \log Z &\geq 
    \E_{\vz^{(1:N)} \sim q}
    \left[
        \log
        \left(
            w_\text{IWVI}^N\bigl(\vz^{(1:N)}\bigr)
        \right)
    \right]\\
    &\eqqcolon
    \textrm{ELBO}_\text{IWVI}^N(f, q),
    \end{split}
\end{align}
recovers $\mathrm{ELBO}_\text{VI}(f,q)$ for $N=1$ (see \Cref{eq:elbo} and red highlights on the $x$ axis of \Cref{fig:n-k-d-overview}).
Sampling from~$q$ at inference time requires a sampling-importance-resampling (SIR) procedure \citep{cremer2017reinterpreting} which we denote by $\text{IWVI}_\text{SIR}$.
The corresponding Markov kernel is known as i-SIR and studied in detail by \citet{andrieu2018uniform}.
As $N$ grows, the bound becomes tighter.
In the limit of $N\to\infty$, \citet{domke2018importance} find (based on results due to \citet[Proposition 1]{maddison2017filtering}) the following.

\begin{theorem}[Theorem 3 in \citet{domke2018importance}] \label{thm:domke-sheldon}
    For large $N$, the gap $\Delta_\textup{IWVI}^N := \log Z - \textup{ELBO}_\textup{IWVI}^N$ of importance weighted VI is proportional to the variance of $w_\textup{IWVI}^N$, defined in \Cref{eq:weight-IWVI}.
    Formally, if $\lim\sup_{N \to \infty}\E_q[1 \,/\, w_\textup{IWVI}^N] < \infty$ and there exists some $\alpha > 0$ such that ${\E_q\bigl[{|w_\textup{IWVI}^N - \log Z|^{2 + \alpha}}\bigr] < \infty}$, then 
    \begin{align}
        \lim_{N \to \infty} 
        N \Delta_\textup{IWVI}^N
        =
        \frac{\operatorname{Var}_{q}\bigl[w_\textup{IWVI}^N\bigl(\vz^{(1:N)}\bigr)\bigr]}{2Z^2}.
    \end{align}
\end{theorem}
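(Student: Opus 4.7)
The plan is to apply a second-order delta-method analysis to $\Delta^N_\textup{IWVI} = \log Z - \E_q[\log w^N_\textup{IWVI}]$: Taylor-expand $\log$ about its mean, identify the leading quadratic contribution via $\Var_q[w^N_\textup{IWVI}] = \sigma^2/N$ (where $\sigma^2 := \Var_q[w(\vz)]$ and $w(\vz) := f(\vz)/q(\vz)$), and then control the higher-order remainder using the stated moment and inverse-moment hypotheses. This is the standard route taken by \citet{domke2018importance}, building on the filtering variance analysis of \citet{maddison2017filtering}.

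Setting $\overline{w}_N := w^N_\textup{IWVI}(\vz^{(1:N)})$ and using $Z = \E_q[\overline{w}_N]$, I would rewrite
\[
\Delta^N_\textup{IWVI} = -\E_q\!\left[\log\!\left(1 + \tfrac{\overline{w}_N - Z}{Z}\right)\right]
\]
and expand $\log(1+x) = x - \tfrac{1}{2}x^2 + R(x)$. The linear term vanishes in expectation by unbiasedness, the quadratic term contributes $\Var_q[\overline{w}_N]/(2Z^2) = \sigma^2/(2NZ^2)$, and multiplication by $N$ yields the claimed $\sigma^2/(2Z^2)$, which equals $\lim_{N\to\infty} N \cdot \Var_q[\overline{w}_N]/(2Z^2)$. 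So modulo the remainder, the conclusion drops out from plain Taylor expansion.

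The main obstacle is showing $N\,\E_q[R((\overline{w}_N - Z)/Z)] \to 0$. On the ``good'' event $A_N := \{|\overline{w}_N - Z| \leq Z/2\}$, one has $|R(x)| \leq C|x|^3$ for the standard Taylor remainder; the $(2+\alpha)$-moment hypothesis combined with a Marcinkiewicz--Zygmund bound gives $\E_q|\overline{w}_N - Z|^{2+\alpha} = O(N^{-1-\alpha/2})$, and Hölder's inequality then yields $N\,\E_q[\,|R|\,\mathbf{1}_{A_N}\,] = o(1)$. On the ``bad'' event $A_N^c$ the remainder estimate fails because $\overline{w}_N$ can be near zero, making $|\log \overline{w}_N|$ unbounded; here the assumption $\limsup_N \E_q[1/\overline{w}_N] < \infty$ enters crucially through the elementary bound $|\log u| \leq u^{-1} + u$ for $u>0$, while Markov's inequality applied to $|\overline{w}_N - Z|^{2+\alpha}$ gives $\Pr(A_N^c) = O(N^{-1-\alpha/2})$, so the contribution of $A_N^c$ is negligible even after multiplication by $N$. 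The delicate step, and the one I expect to require the most care, is this interlocking control of both tails—the quadratic bulk on $A_N$ and the logarithmic singularity on $A_N^c$—since it is the only place where the two seemingly disparate hypotheses of the theorem (the $(2+\alpha)$-moment and the inverse-moment bound) actually do work together.
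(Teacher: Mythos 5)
The paper itself does not prove this statement: it is imported verbatim from \citet{domke2018importance}, and the ``proof'' is a pointer to that reference. Your delta-method sketch is essentially the argument used there (building on \citet{maddison2017filtering}): write $\Delta^N_\textup{IWVI}=-\E_q[\log(1+x)]$ with $x=(\overline{w}_N-Z)/Z$, kill the linear term by unbiasedness, read off $\Var_q[\overline{w}_N]/(2Z^2)=\sigma^2/(2NZ^2)$ from the quadratic term, and show the remainder is $o(1/N)$ by splitting on a good/bad event. Your good-event bound is fine: Marcinkiewicz--Zygmund gives $\E_q|\overline{w}_N-Z|^{2+\alpha}=O(N^{-1-\alpha/2})$ (reading the paper's hypothesis $\E_q[|w^N_\textup{IWVI}-\log Z|^{2+\alpha}]<\infty$ as the per-sample moment condition on $w-Z$ it is evidently meant to be), and on $A_N$ one has $|x|^3\le 2^{\min(\alpha,1)-1}|x|^{2+\min(\alpha,1)}$, so $N\,\E_q[|R|\1_{A_N}]=O(N^{-\alpha'/2})$.

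The one step that does not close as written is the bad event. You bound $|\log u|\le u+u^{-1}$ and then invoke $\limsup_N\E_q[1/\overline{w}_N]<\infty$ together with $\Pr(A_N^c)=O(N^{-1-\alpha/2})$; but a bounded \emph{first} inverse moment only gives $\E_q[\overline{w}_N^{-1}\1_{A_N^c}]=O(1)$ --- you cannot H\"older an $L^1$ bound against an indicator, so nothing forces this term to be $o(1/N)$. The standard repair is to weaken the logarithmic singularity before applying H\"older: for any $\delta\in(0,1)$ and $u>0$ one has $-\log u\le\delta^{-1}u^{-\delta}$, hence
\begin{align}
\E_q\!\left[\bigl(-\log(\overline{w}_N/Z)\bigr)\1_{A_N^c}\right]
\;\le\;
\tfrac1\delta\,\E_q\!\left[Z/\overline{w}_N\right]^{\delta}\,\Pr(A_N^c)^{1-\delta},
\end{align}
by H\"older with exponents $1/\delta$ and $1/(1-\delta)$; choosing $\delta<\alpha/(2+\alpha)$ makes the right-hand side $o(1/N)$. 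Together with the (easier) control of the $u$, $x$, and $x^2$ contributions on $A_N^c$ via the $(2+\alpha)$-moment, this turns your outline into a complete proof, and it is exactly how the two hypotheses interlock in the cited argument.
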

\begin{proof}
    See \citet[Theorem 3]{domke2018importance}.
\end{proof}

Thus, for large~$N$, a higher variance of $w_\textup{IWVI}^N$ corresponds to a larger gap (see \Cref{sec:vi}).
Annealed importance sampling, discussed next, provides a way to further reduce the sampling variance for a fixed number of particles~$N$.

\subsection{(Differentiable) Annealed Importance Sampling} \label{sec:dais}

Annealed importance Sampling (AIS) reduces the variance of $w_\text{IWVI}^N(\vz^{(1:N)})$ further by recalling that the variance of the importance weights  $f(\vz^{(i)})\,/\,q(\vz^{(i)})$ in \Cref{eq:weight-IWVI} is a consequence of a distribution mismatch between $q$ and~$f/Z$.
To reduce the distribution mismatch, AIS interpolates between $q$ and ${f\,/\,Z}$ with distributions $\{\pi_{\beta_k}(\vz_k)\}_{k=0}^K$ over auxiliary variables $\vz_0,\ldots,\vz_K$ for $K$ interpolation steps.

In detail, AIS estimates the normalization constant of an unnormalized distribution $f_\text{AIS}$ over all $\vz_0,\ldots,\vz_K$,
\begin{align}
    f_\text{AIS}(\vz_{0:K}) &\coloneqq
    f(\vz_K)
    \prod_{k=1}^K B_k(\vz_{k-1} \mid \vz_k).
    \label{eq:sub:introduce-B}
\end{align}
Here, the so-called backward transition kernels~$B_k$ have to be normalized in their first argument so that $f_\text{AIS}$ and $f$ have the same normalization constant, ${\int\! f_\text{AIS}(\vz_{0:K})\,\mathrm{d}\vz_{0:K}} = {\int\!f(\vz_K)\,\mathrm{d}\vz_K}=Z$.
Originally, \citet{neal2001annealed} estimates~$Z$ with IS, using a proposal distribution of the form
\begin{align}\label{eq:q-ais}
    q_\text{AIS}(\vz_{0:K}) 
    &\coloneqq 
    q_0(\vz_0)\prod_{k=1}^K 
    F_k(\vz_k \mid \vz_{k-1}),
\end{align}
where we call~$q_0$ the initial distribution and $F_k$ a forward transition kernel.
Instead of using $q_\text{AIS}$ with IS to estimate~$Z$, we can use $q_\text{AIS}$ also with IWVI to obtain a bound on~$Z$,
\begin{align}
    \begin{split}
    \log Z &\geq
    \E_{\vz_{0:K}^{(1:N)} \sim q_\text{AIS}}
    \left[
            \log
                \left(
                w_\text{AIS}\bigl(\vz_{0:K}^{(1:N)}\bigr)
            \right)
    \right] 
    \\
    &\eqqcolon
    \text{ELBO}_\text{AIS}^{N,K}(f_\text{AIS}, q_\text{AIS}) \label{eq:elbo-dais}
    \end{split}
\end{align}
with the annealed importance weights
\begin{align}\label{eq:weight-ais}
    \!\!\!\! w_\text{AIS}^{N,K}\!\bigl(\vz_{0:K}^{(1:N)}\bigr)
    = \frac1N\! \sum_{i=1}^N\!
        \frac{f(\vz_K^{(i)})}{q_0(\vz_0^{(i)})}
        \prod_{k=1}^{K}
        \frac{B_k(\vz_{k-1}^{(i)} \mid \vz_{k}^{(i)})}{F_k(\vz_{k}^{(i)} \mid \vz_{k-1}^{(i)})}.\!
\end{align}

\Cref{eq:elbo-dais} and \Cref{eq:weight-ais} hold for arbitrary normalized forward and backward kernels $F_k$ and~$B_k$ (as long as $\operatorname{support}(q_\text{AIS}) \supseteq \operatorname{support}(f_\text{AIS})$).
In practice, however, we want to draw samples~$\vz_k^{(i)}$ from distributions that interpolate smoothly between $q_0(\vz_0)$ and $f(\vz_K)$ so that none of the factors on the right-hand side of \Cref{eq:weight-ais} has a large variance.
One typically realizes each $F_k({\vz_k\mid \vz_{k-1}})$ as a Markov Chain Monte Carlo process (typically Hamiltonian Monte Carlo (HMC)), which leaves a distribution $\pi_{\beta_k}(\vz_k)$ invariant, where the distributions $\{\pi_{\beta_k}\}_{k=0}^K$ interpolate between $\pi_{\beta_0} \coloneqq q_0$ and $\pi_{\beta_K} \coloneqq {f\,/\,Z}$.
The most common choice of~$\pi_{\beta_k}$ uses the geometric mean (aka annealing), i.e.,
\begin{align} \label{eq:def-pi}
    \!\!\!\!\pi_{\beta_k}(\vz) \!\coloneqq\! \frac{\gamma_{\beta_k}(\vz)}{Z_{\beta_k}}
    \quad\!\!\!\text{with}\quad\!\!\!
    \gamma_{\beta_k}(\vz) \!\coloneqq\! q_0(\vz) \! \left(\;\!\!\frac{f(\vz)}{q_0(\vz)}\;\!\!\right)^{\!\!\beta_k}
\end{align}
where $Z_{\beta_k} = {\int\!\gamma_{\beta_k}(\vz)\,\mathrm{d}\vz}$ and $0=\beta_0 <\beta_1 < \cdots < \beta_K=1$.
To ensure that \Cref{eq:weight-ais} can be evaluated for samples $\vz_{1:K} \sim q_\text{AIS}$, one typically sets~$B_k$ to the reversal of~$F_k$, i.e., $B_k({\vz_{k-1} \mid \vz_k}) \coloneqq F({\vz_k \mid \vz_{k-1}})\,\gamma_{\beta_k}(\vz_{k-1})\,/\,\gamma_{\beta_k}(\vz_k)$, which is properly normalized.
With this choice, \Cref{eq:weight-ais} simplifies to
\begin{align}\label{eq:weight-ais-simplified}
    w_\text{AIS}^{N,K}\!\bigl(\vz_{0:K}^{(1:N)}\bigr)
    = \frac1N \sum_{i=1}^N \prod_{k=1}^K
        \frac{\gamma_{\beta_k}\bigl(\vz_{k-1}^{(i)}\bigr)}{\gamma_{\beta_{k-1}}\bigl(\vz_{k-1}^{(i)}\bigr)}.
\end{align}

\paragraph{Differentiable Annealed Importance Sampling.}
To ensure that $F_k({\vz_k\mid \vz_{k-1}})$ leaves $\pi_{\beta_k}$ invariant, the HMC implementation of~$F_k$ involves a Metropolis-Hastings (MH) acceptance or rejection step, which makes the method non-differentiable.
Differentiable annealed importance sampling (DAIS) drops this MH step.
The resulting ``uncorrected'' transition kernels~$F_k$ do not leave~$\pi_{\beta_k}$ invariant.
Instead, the backward kernels $B_k$ are defined by starting from an exact sample and reversing the forward chain.
With these modifications, the resulting lower bound $\text{ELBO}_\text{DAIS}^{N,K}(f_\text{DAIS}, q_\text{DAIS})$ has the same functional form as $\text{ELBO}_\text{AIS}^{N,K}(f_\text{AIS}, q_\text{AIS})$, where only the kernels $F_k$ and $B_k$ differ.
Furthermore, it can be optimized with reparameterization gradients.

\Cref{thm:domke-sheldon} \citep[Theorem 3]{domke2018importance} also applies to this bound.
Therefore, for large $N$, the gap of AIS is proportional to the variance of $w_\text{AIS}^{N,K}$, which, for good choices of the forward and backward kernels $F_k$ and~$B_k$, is smaller than the variance of the estimator of IWVI.

\section{Analyzing the Initial Distribution of DAIS} \label{sec:method}

In \Cref{thm:main} of this section we present our main theoretical result that DAIS minimizes the symmetrized KL divergence between its initial distribution and its target distribution.
Then, we discuss compact representations in terms of their sampling complexity.
Finally, we frame finding the initial distribution of DAIS as a form of VI.

\subsection{DAIS Minimizes the Symmetrized KL Divergence} \label{sec:method:theory}

\Cref{thm:main} below presents our main theoretical contribution showing that DAIS minimizes the symmetrized KL divergence between its initial and target distribution.
We see \Cref{thm:main} as starting point to motivate $\text{DAIS}_0$ and the analysis of the shape of the fitted initial distribution~$q_0$.
The result also holds true for AIS, but it is most relevant for DAIS, which learns parameters of its initial distribution.
One can also show \Cref{thm:main} by combining results of \citet{grosse2013annealing} and \citet{crooks2007measuring} from the physics literature. 
\citet{brekelmans2022improving} state a related bound on the difference between an upper and a lower bound on the evidence (whereas \Cref{thm:main} makes an asymptotic statement about the gap between lower bound and evidence).

\begin{theorem} \label{thm:main}
    Let $\operatorname{support}(q_0) \supset \operatorname{support}(f)$.
    We assume that the transitions between consecutive annealing distributions are perfect and that~$\beta_k$ are equally spaced, i.e., $\beta_k = k/K$.
    Then, for large~$K$ and $N=1$, the gap $\Delta_\mathrm{AIS}^{1,K} \coloneqq \log Z - \mathrm{ELBO}_\mathrm{AIS}^{1,K}(f_\mathrm{AIS}, q_\mathrm{AIS})$ is a divergence between the initial distribution $q_0$ and the target distribution $f/Z$.
    Up to corrections of $\gO(1\,/\,K^3)$, the divergence is proportional to the symmetrized KL divergence,
    \begin{align}
        \begin{split}
        \Delta_\mathrm{AIS}^{1,K}
        =
        \frac{1}{K}
        &\left(
            \frac{1}{2}\KL(f(\vz)\,/\,Z \,\|\, q_0(\vz))
        \right.
        \\
        &\quad
        \left.
        +
        \frac{1}{2}\KL(q_0(\vz) \,\|\, f(\vz)\,/\,Z)
        \right)
        \\
        &
        + \gO\bigl(1\,/\,K^3\bigr).
        \end{split} \label{eq:thm-main}
    \end{align}
\end{theorem}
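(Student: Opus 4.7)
The plan is to use the perfect-transition assumption to rewrite $\E[\log w_\text{AIS}^{1,K}]$ as a left-endpoint Riemann sum and then identify $\log Z$ as the corresponding integral via thermodynamic integration. Under perfect transitions, $\vz_k \sim \pi_{\beta_k}$ marginally for every $k$, and with $\beta_k - \beta_{k-1} = 1/K$, each factor in \Cref{eq:weight-ais-simplified} satisfies $\gamma_{\beta_k}(\vz_{k-1})/\gamma_{\beta_{k-1}}(\vz_{k-1}) = (f(\vz_{k-1})/q_0(\vz_{k-1}))^{1/K}$. Hence
\[
\log w_\text{AIS}^{1,K}(\vz_{0:K}) = \frac{1}{K}\sum_{k=0}^{K-1}\log\frac{f(\vz_k)}{q_0(\vz_k)},
\]
and defining $g(\beta) := \E_{\pi_\beta}[\log(f/q_0)]$ gives $\E[\log w_\text{AIS}^{1,K}] = \frac{1}{K}\sum_{k=0}^{K-1} g(k/K)$, which is exactly the left-endpoint Riemann sum of $g$ on $[0,1]$.

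Next, I would invoke the thermodynamic integration identity $\tfrac{d}{d\beta}\log Z_\beta = g(\beta)$, which together with $Z_0 = 1$ and $Z_1 = Z$ yields $\log Z = \int_0^1 g(\beta)\,d\beta$. The gap $\Delta_\text{AIS}^{1,K}$ is therefore precisely the quadrature error, and Taylor-expanding $g$ on each subinterval followed by resummation gives $\Delta_\text{AIS}^{1,K} = (g(1)-g(0))/(2K) + \gO(1/K^2)$. A direct computation then shows $g(0) = \E_{q_0}[\log(f/q_0)] = \log Z - \KL(q_0\,\|\,f/Z)$ and $g(1) = \E_{f/Z}[\log(f/q_0)] = \log Z + \KL(f/Z\,\|\,q_0)$, so $g(1) - g(0)$ is exactly the symmetrized KL divergence appearing in \Cref{eq:thm-main}.

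The main obstacle is sharpening the error from the generic $\gO(1/K^2)$ of a left-endpoint rule to the $\gO(1/K^3)$ stated in the theorem. The next Euler--Maclaurin correction is proportional to $g'(1) - g'(0) = \Var_{f/Z}(\log(f/q_0)) - \Var_{q_0}(\log(f/q_0))$, which is obtained by differentiating $g$ under the integral sign and is generically nonzero. To eliminate this term I would exploit the Crooks-style symmetry between the forward annealing $q_0 \to f/Z$ and its time reversal $f/Z \to q_0$ alluded to by the authors: under the substitution $\beta \leftrightarrow 1-\beta$ the two quadrature errors share the same leading $1/K$ term but their $1/K^2$ contributions enter with opposite signs, so symmetrizing the two expansions cancels the offending correction and leaves only the claimed $\gO(1/K^3)$ residue. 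Carrying out this pairing rigorously---matching the Euler--Maclaurin expansions of the forward and reverse processes term by term---would be the main technical hurdle.
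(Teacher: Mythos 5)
Your first two paragraphs are essentially the paper's argument in a cleaner form. The paper also reduces the gap to a quadrature error: it writes $\Delta^{1,K}_{\mathrm{AIS}} = \sum_{k}\KL(\pi_{\beta_{k-1}}\,\|\,\pi_{\beta_k})$, Taylor-expands each term (the first-order term vanishes by normalization, and the second derivative is $\partial^2\log Z_\eta/\partial\eta^2$, which is exactly your $g'(\eta)$), and recovers the symmetrized KL as the boundary term $\int_0^1 g'(\eta)\,\mathrm{d}\eta = g(1)-g(0)$. Your route via the left-endpoint Riemann sum of $g(\beta)=\E_{\pi_\beta}[\log(f/q_0)]$ together with the thermodynamic-integration identity $\log Z=\int_0^1 g$ is equivalent and arguably more transparent; the leading term $\tfrac{1}{2K}\bigl(g(1)-g(0)\bigr)$ and its identification with the symmetrized KL divergence are correct and match \Cref{eq:thm-main}.

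The obstacle you flag in your last paragraph is genuine, and it is in fact a gap in the paper's own proof rather than a defect of your approach. The paper arrives at $\Delta^{1,K}_{\mathrm{AIS}}=\tfrac{1}{2K}\bigl(\tfrac1K\sum_k G(\eta_{k-1})\bigr)$ with $G=\partial^2\log Z_\eta/\partial\eta^2$ and tags $\eta_{k-1}\in[\beta_{k-1},\beta_k]$ supplied by the Lagrange remainder, and then asserts that this Riemann sum approximates $\int_0^1 G$ to $\gO(1/K^2)$; that accuracy holds only for controlled tag choices (e.g., midpoints), whereas Lagrange tags are arbitrary, so the honest error at that step is $\gO(1/K)$ and the final remainder should be $\gO(1/K^2)$. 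Your Euler--Maclaurin computation makes this concrete: the next term is $-\tfrac{1}{12K^2}\bigl(\Var_{f/Z}[\log(f/q_0)]-\Var_{q_0}[\log(f/q_0)]\bigr)$, which is nonzero already for two Gaussians with unequal variances. However, your proposed repair does not close this gap for the theorem as stated: pairing the forward annealing with its reversal cancels the $1/K^2$ terms only in the \emph{average} of the lower- and upper-bound gaps (the quantity treated by \citet{brekelmans2022improving}), not in $\Delta^{1,K}_{\mathrm{AIS}}$ itself, which retains its $1/K^2$ contribution. The defensible conclusion of both your argument and the paper's is \Cref{eq:thm-main} with the remainder weakened to $\gO(1/K^2)$; apart from this (shared) issue, your derivation is sound.
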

\vspace{-1em}
\begin{proof}
    We prove the theorem in \Cref{sup:sec:thm-main-proof}.
\end{proof}

\paragraph{Statement for $N>1$.}
For a general $N>1$ we find that $\Delta_\text{AIS}^{N,K} \leq \Delta_\text{AIS}^{1,K}$ and therefore 
\begin{align}
    \lim_{K\to\infty} \Delta_\text{AIS}^{N,K}
    \leq 
    \frac 1K D_\text{JS}(q_0, f/Z) + \gO\bigl(1\,/\,K^3\bigr),
\end{align}
where $D_\text{JS}$ denotes the symmetrized KL divergence.
This shows that the bound for $N>1$ can be upper-bounded by the symmetrized KL divergence but does not give additional insights on, e.g., symmetry. 
For $N=1$ the inequality is an equality. See \Cref{par:statement-n-greater-1} for a discussion in greater detail.

\paragraph{Reverse KL Divergence.}
VI is known to underestimate uncertainties \citep{blei2017variational} which is due to an asymmetry in the ELBO. 
More specifically, $\text{ELBO}_\text{VI}$ (\Cref{eq:elbo}) minimizes the reverse KL divergence $\KL(q \,\|\, {f\,/\,Z})$, which takes the expectation over~$q$.
Therefore, penalties for regions in $\vz$-space where $q(\vz) > f(\vz)\,/\,Z$ are weighted higher than penalties for regions where $q(\vz) < f(\vz)\,/\,Z$.
As a result, $\text{ELBO}_\text{VI}$ is mode-seeking and tends to underestimate the true posterior variance.
For a small~$N$, IWVI also shows a mode-seeking behavior, which can be overcome by increasing $N$.
\Cref{thm:main} states that, at least for $K\to\infty$ and $N=1$, $\text{ELBO}_\text{DAIS}^{N,K}$ does not suffer from such an asymmetry.
We investigate the situation for $K<\infty$ and $N\geq1$ empirically in various experiments in \Cref{sec:experiments}.

\paragraph{Forward KL Divergence.}
In contrast, the forward KL divergence $\KL({f\,/\,Z}\,\|\,q)$ is mass-covering since the expectation is taken over~$f/Z$ and regions in $\vz$-space where $q(\vz) < f(\vz)\,/\,Z$ are weighted higher than the regions where $q(\vz) > f(\vz)\,/\,Z$.
While the forward KL divergence is less likely to underestimate posterior variances, it often performs poorly in moderate to high dimensions when posterior correlations increase \citep{dhaka2021challenges}.
\Cref{sec:experiments} provides empirical evidence that $\text{DAIS}_0$ (implicitly minimizing the symmetrized KL divergence) indeed outperforms MSC (minimizing the forward KL divergence).

\begin{figure}[!hb]
    \centering
    \includegraphics{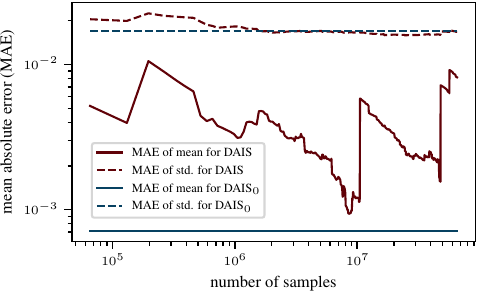}
    \caption{
    Mean absolute error of estimated mean and standard deviation as a function of the number of DAIS samples used for the estimate.
    The estimator converges poorly (see \Cref{sec:method:compactness}).
    }
    \label{fig:estimation-error-number-samples}
\end{figure}

\paragraph{Limitations.} 
We show \Cref{thm:main} for large $K$ which might be prohibitive in real world experiments.
Additionally, we generally use more than $N=1$ particles in DAIS.
Therefore, we expect that \Cref{thm:main} holds only approximately in practice.
For large~$N$, \Cref{thm:domke-sheldon} \citep[Theorem 3]{domke2018importance} holds and the gap closes.
As a result, we see a combination of effects in our experiments with practical (moderate) values for $K$ and~$N$ (see \Cref{sec:experiments}).

\subsection{Compact Representation of the Initial Distribution} \label{sec:method:compactness}

The initial distribution of DAIS, $q_0$, provides both an analytical expression (i.e., it is \textit{explicit}) and allows for tractable sampling and evaluation (i.e., it is \textit{compact}).
In the following, we discuss positive implications of this representation.

Importance sampling is likely to suffer from inefficiencies due to the sampling complexity in high dimensions.
Namely, if $f$~becomes increasingly complicated, and thus the mismatch between~${f\,/\,Z}$ and the proposal distribution~$q$ grows, the variance of the importance weight grows exponentially in its dimension \citep{bamler2017perturbative}.
\citet{agapiou2017importance} show that also the number of samples grows exponentially in the dimension of the problem.
Furthermore, \citet{chatterjee2018sample} show that the sample size is exponential in the KL divergence between the two measures (here: $\KL({f\,/\,Z}\,\|\,q)$) if they are nearly singular with respect to each other (which is often the case in practice).

As we discuss in \Cref{sec:background} above, DAIS can be seen as importance sampling on an augmented space.
Therefore, DAIS can also suffer from high sampling complexity in high dimensions.
While we should expect samples from DAIS to technically follow the target distribution more faithfully than samples from the initial distribution~$q_0$, obtaining accurate estimates of summary statistics (e.g., mean and variance) of the target distribution from DAIS samples can be prohibitively expensive in high dimensions.
By contrast, $q_0$ is typically parameterized by interpretable summary statistics that can be read off without requiring empirical estimates over exponentially many expensive MCMC samples.

In \Cref{sec:experiments} below, we observe how difficult it can be in practice to estimate summary statistics from DAIS samples.
\Cref{fig:estimation-error-number-samples}
shows the mean absolute error (MAE) of the estimated mean and standard deviation for the Gaussian process experiment ($\text{RBF}_1, d=25$ in \Cref{sec:exp:gp-inference}) as a function of the number of DAIS samples used for the estimate (red curves).
Note the sudden jumps of the estimator.
We compare to the mean and standard deviations read off directly from the initial distribution $q_0$ (horizontal blue lines).

\subsection{The Initial Distribution of DAIS for Inference}

In the following, we investigate the initial distribution~$q_0$ of DAIS as a candidate for an approximate posterior distribution.
We denote this scheme as $\text{DAIS}_0$: 
At training time, $\text{DAIS}_0$ mirrors DAIS and maximizes $\text{ELBO}_\text{DAIS}^{N,K}$.
At inference time, $\text{DAIS}_0$ drops the AIS steps and solely uses $q_0$ as a variational approximation to the target distribution.

Following the approach by \citet{zhang2021differentiable} we use DAIS with HMC dynamics.
Thus, \Cref{eq:weight-ais-simplified} simplifies to
\begin{align}
    \frac1N \sum_{i=1}^N
    \frac{f(\vz_K)}{q_0(\vz_0)}\prod_{k=1}^K \frac{\mathcal N(\mathbf{v}_k^{(i)};0,\mM)}{\mathcal N(\mathbf{v}_{k-1}^{(i)}; 0, \mM)},
\end{align}
where $\mathcal N(\mathbf{v}_k^{(i)}; 0, \mM)$ denotes the density of the normal distribution at point $\mathbf{v}_k^{(i)}$ with covariance matrix~$\mM$ and mean $0$.
$\mathbf{v}_k^{(i)}$ and $\mM$ are the momenta and the (diagonal) mass matrix of HMC, respectively.
The initial distribution~$q_0$ is a fully factorized normal distribution.
We use gradient updates to learn the means and variances of~$q_0$, the annealing schedule ($\beta_1,\ldots,\beta_{K-1}$), and the step width of the sampler.

As highlighted in \Cref{sec:method:compactness}, we do not expect that the initial distribution of DAIS ($\text{DAIS}_0$) generally outperforms the DAIS method.
However, we highlight that $\text{DAIS}_0$ provides various desirable properties due to its compact representation.
Additionally, $\text{DAIS}_0$ is much more computationally efficient at inference time compared to DAIS.
In comparison to VI, $\text{DAIS}_0$ is expected be less mode-seeking (\Cref{sec:method:theory}), and we pose that the symmetrized KL divergence is less susceptible to problems optimizing the forward 
KL divergence.
\Cref{sec:experiments} provides experimental evidence.

\section{Experiments} \label{sec:experiments}

\begin{figure*}[!t]
    \centering
    \includegraphics[width=\textwidth]{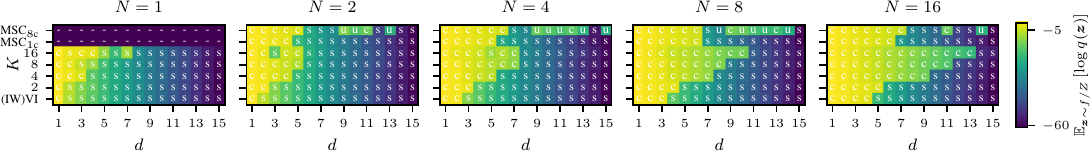}
    \vskip-0.5em
    \caption{
    Density of variational distributions of VI, IWVI, MSC, and $\text{DAIS}_0$ ($K$) evaluated on samples from a $d$-dimensional bimodal Gaussian target distribution.
    ``-'': unable to find an optimum, ``c'': mass-covering distribution, ``s'': mode-seeking distribution,
    ``u'': undecidable whether ``c'' or ``s''.
    $\text{DAIS}_0$ achieves higher densities in higher dimension~$d$ for increasing $K$ across all considered $N$.
    MSC does not converge for $N=1$.
    $\text{MSC}_{1\text{c}}$ learns variational distributions that are less mass-covering for larger $d$ than $\text{DAIS}_0\ (16)$.
    $\text{MSC}_{8\text{c}}$ achieves sometimes higher densities in higher dimension~$d$ but performs inconsistent across $N$. 
    Results are discussed in \Cref{sec:exp:d-dim-blobs}.
    }
    \label{fig:n-dim-gaussian-blobs}
\end{figure*}

In \cref{sec:exp:d-dim-blobs}, we investigate $\text{DAIS}_0$ qualitatively for various dimensions, $N$, and $K$ on toy data by analyzing its mode-seeking or mass-covering behavior.
In \cref{sec:exp:gp-inference} and \Cref{sec:exp:log-reg}, we show quantitative results on both generated and real world data.
Throughout this section, we compare $\text{DAIS}_0$, IWVI, MSC, $\text{IWVI}_{\text{SIR}}$, and DAIS, where the first three find a compact representation of the approximate posterior while the latter two require sampling at inference time.
As MSC is known to have convergence issues due to high variance \citep{kim2022markov} we report results using $8$ parallel chains as $\text{MSC}_{8\text{c}}$ and results with $1$ chain as $\text{MSC}_{1\text{c}}$. 

\begin{figure}[!b]
    \centering
    \includegraphics{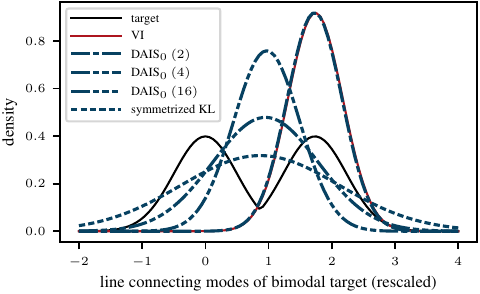}
    \caption{
    Densities on the diagonal between the two modes of the bimodal Gaussian distribution (same experiment as \Cref{fig:n-dim-gaussian-blobs} with $N=1, d=3$). 
    VI covers a single mode while $\text{DAIS}_0$ covers both modes with increasing $K$ (details in \Cref{sec:exp:d-dim-blobs}).
    }
    \label{fig:density-on-diagonal}
\end{figure}

\subsection{Bimodal Target Distribution} \label{sec:exp:d-dim-blobs}

This experiment is designed to investigate the behavior of $\text{DAIS}_0$ for $N\geq 1$ and $K\geq 1$.
More specifically, we investigate the relationship between the parameters $N$ and $K$ alongside the dimension $d$ of the variational distribution of $\text{DAIS}_0$, and compare it to the compact methods (IW(VI), MSC).
Thus, the experiment covers the entire space spanned in \Cref{fig:n-k-d-overview}.
For further details, see \Cref{sup:sec:d-dim-blob}.

We consider a bimodal target distribution $f(\vz)\,/\,Z$ that is an equally-weighted mixture of two $d$-dimensional Gaussians with means $(0,\ldots,0)$ and $(1,\ldots,1)$, respectively, and covariance matrices $0.25^2 \mI$.
We learn the variational distributions of VI, IWVI, $\text{DAIS}_0$, and MSC (both, with $1$ chain and $8$ chains).
To evaluate the learned variational distributions, we draw $1,000$ samples from the target distribution and compute the log density under~$q$, i.e., $\E_{\vz \sim f/Z}[\log(q(\vz))]$.
In \Cref{fig:n-dim-gaussian-blobs} we plot results for $N \in \{1, 2, 4, 8, 16\}$ particles and $K \in \{2, 4, 8, 16\}$ transitions in $d \in \{1, \dots, 15\}$ dimensions.
\Cref{fig:density-on-diagonal} shows the density along the line from $(0,\ldots,0)$ to $(1,\ldots,1)$ for $N=1$ and $d=3$.

\begin{figure*}[!ht]
    \centering
    \includegraphics{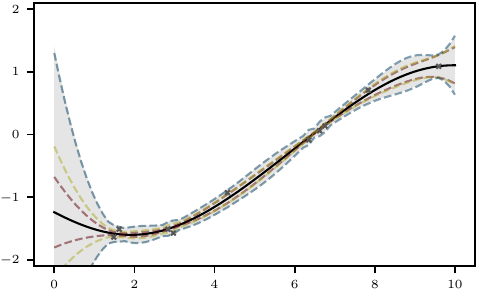}
    \hskip1em
    \includegraphics{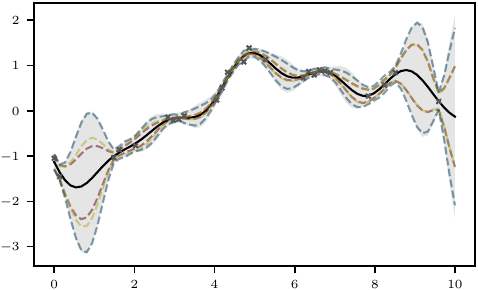}
    \vskip1ex
    \includegraphics{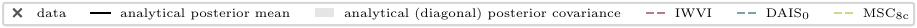}
    \caption{
    Gaussian process regression on generated data, using a prior with RBF kernel with two different sets of parameters (see $\star$~in \Cref{tab:gp-inference}). 
    We show $97.5\%$ quantiles of the posterior covariance for analytic (shaded gray; covariance matrix is diagonalized on data points, see \Cref{sup:sec:gp-inference}), IWVI (red), $\text{DAIS}_0$ (blue), and $\text{MSC}_{8\text{c}}$ (yellow).
    Learned means are indistinguishable from the analytic mean (black) at this line width.
    $\text{DAIS}_0$ often provides more accurate uncertainty estimates compared to the other methods (details in \Cref{sec:exp:gp-inference}).
    }
    \label{fig:gp-inference}
\end{figure*}

\paragraph{Mode-Seeking Versus Mass-Covering.}
In \Cref{fig:density-on-diagonal} we can clearly see that VI and $\text{DAIS}_0$ with $K=2$ transitions lead to a mode-seeking distribution.
For an increasing $K$ (i.e., $K\in\{4, 16\}$), we find that the variational distribution of $\text{DAIS}_0$ becomes more mass-covering and increasingly similar to the solution that (numerically) minimizes the symmetrized KL divergence (regularly dashed line).
This is consistent with \Cref{thm:main}.
We find that we can unambiguously classify almost all of the learned variational distributions into being mode-seeking (``s'') or mass-covering (``c'') by calculating the distance of the mean of the variational distribution to both modes of the target distribution, and to their mid point (\Cref{sup:sec:d-dim-blob} provides a histogram of these distances and more details on the classification).
\Cref{fig:n-dim-gaussian-blobs} shows corresponding classifications as labels ``s'' and ``c'' for each combination $(N, K, d)$. We label the few cases where the classification is not perfectly unambiguous with  ``u'' which is short for for ``undecidable''.

Comparing (IW)VI and $\text{DAIS}_0\ (16)$ in \Cref{fig:n-dim-gaussian-blobs}, we find that $\text{DAIS}_0\ (16)$ typically covers both modes of the target distribution for larger dimensions (across all $N$).
We attribute this finding to the symmetrized KL divergence that is implicitly minimized by $\text{DAIS}_0$.
MSC does not converge to reasonable values (``-'') for $N=1$.
With a single chain $\text{MSC}_{1\text{c}}$ outperforms (IW)VI over all $N$ but seems to be less mass-covering than $\text{DAIS}_0$.
With $8$ parallel chains $\text{MSC}_{8\text{c}}$ sometimes outperforms the other methods in finding a mass-covering distribution.
However, we find that the variational distributions of $\text{MSC}_{8\text{c}}$ are often not centered on the mid-point between the two modes which we attribute to the high variance of the method (see \Cref{sup:sec:d-dim-blob}).
This is in accordance with \citet{kim2022markov} who report high variance for MSC and  \citet{dhaka2021challenges} who note that the forward KL divergence is difficult to optimize for large dimensions.

\paragraph{Number of Transitions $K$.} 
For a fixed $N$, we find that with increasing $K$, $\text{DAIS}_0$ ($q_0$) achieves higher log densities compared to VI, IWVI, and MSC, especially in higher dimensions.
For example, for $N=4$, we find that $q_0$ covers both modes with mass for a dimension up to $d=7$ while VI and IWVI collapse to a single mode after $d=1$ and $d=2$, respectively.
These results align with our theoretical findings.
$\text{MSC}_{1\text{c}}$ collapses after $d=5$ which we attribute to the optimization of the forward KL divergence.
$\text{MSC}_{8\text{c}}$ performs better but inconsistently for higher dimensions.

\paragraph{Number of Particles $N$.} 
With an increasing number of particles~$N$, we see that IWVI and $\text{DAIS}_0$ improve.
This follows the theoretical result on the number of particles (presented in \Cref{thm:domke-sheldon} \citep[Theorem 3]{domke2018importance}) which suggests to choose $N$ as large as possible.
Also, MSC shows higher densities with an increasing $N$; for $\text{MSC}_{8\text{c}}$ the results are often inconsistent or undecidable.

\subsection{Gaussian Processes Regression} \label{sec:exp:gp-inference}

\begin{figure*}[!t]
    \centering
    \includegraphics{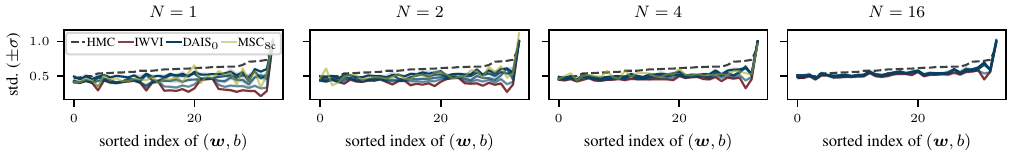}
    \vskip-0.75em
    \caption{
        Bayesian logistic regression:
        standard deviations of learned posterior weight vector (indices are plotted on the $x$-axis, we report $\pm \sigma$) for IWVI (red), $\text{MSC}_{8\text{c}}$ (yellow) and $\text{DAIS}_0$ (blue), compared to HMC samples (dashed) on \textit{ionosphere}.
        $\text{DAIS}_0$ provides uncertainty estimates that improve with increasing $K \in \{2, 4, 8, 16\}$ (shown by increasing opacity).
        Details in \Cref{sec:exp:log-reg}.
        }
    \label{fig:log-reg}
\end{figure*}

\begin{table*}[!b]
    \vskip-1.5em
    \renewcommand{\arraystretch}{1.2}
    \caption{
        Mean absolute error (compared to analytic solution) of mean and standard deviation of GP regression ($N=16$).
        $\text{DAIS}_0$~$(K)$ gives accurate uncertainty estimates for most cases (within all compact methods) while degrading the predicted mean only negligibly (details in \Cref{sec:exp:gp-inference}).
        $\text{RBF}_{\{1,2\}}$ denote different prior parameters.
        \Cref{fig:gp-inference} shows ``$\star$'' visually.
        Results are averaged over 3 runs.
    }
    \vskip 0.15in
    \small
    \centering
    \begin{tabular}{@{$\;$}cc@{$\,$}cl|ccccc@{$\;$}}
        & & $d$ & MAE &
        IWVI & $\text{IWVI}_\text{SIR}$ &
        $\text{DAIS}_0$ ($16$) & DAIS ($16$) & $\text{MSC}_{8\text{c}}$
        \\
        \shline
        \multirow{4}{*}{$\text{RBF}_1$}
         & \multirow{2}{*}{$\!\!\!\!\star$} & \multirow{2}{*}{$10$} & mean &
          $\mathit{{5.60}}_{\pm1.10}\cdot\mathit{10^{-4}}$&
          $\bm{2.83}_{\pm1.70}\cdot\bm{10^{-4}}$&
          ${1.77_{\pm.31}\cdot10^{-3}}$&
          ${7.72_{\pm.23}\cdot10^{-3}}$&
          ${1.74_{\pm.24}\cdot10^{-2}}$
         \\
         & & & std. &
          ${4.34_{\pm.0020}\cdot10^{-2}}$&
          ${3.50_{\pm.027}\cdot10^{-2}}$&
          $\mathit{4.54}_{\pm.071}\cdot\mathit{10^{-3}}$&
          $\bm{3.028}_{\pm.25}\cdot\bm{10^{-3}}$&
          ${3.66_{\pm.14}\cdot10^{-2}}$
         \\ \cline{2-9}
         & & \multirow{2}{*}{$25$} & mean &
          $\mathit{3.57}_{\pm1.10}\cdot\mathit{10^{-4}}$&
          $\bm{4.17}_{\pm4.50}\cdot\bm{10^{-5}}$&
          ${9.58_{\pm1.40}\cdot10^{-4}}$&
          ${1.39_{.036}\cdot10^{-2}}$&
          ${1.14_{\pm.16}\cdot10^{-2}}$
         \\
         & & & std. &
          ${3.83_{\pm.00030}\cdot10^{-2}}$&
          ${3.62_{\pm.042}\cdot10^{-2}}$&
          $\bm{\mathit{1.03}}_{\pm.0010}\cdot\bm{\mathit{10^{-2}}}$&
          ${2.16}_{\pm.027}\cdot{10^{-2}}$&
          ${3.77_{\pm.015}\cdot10^{-2}}$
         \\
         \hline
        \multirow{4}{*}{$\text{RBF}_2$}
         & & \multirow{2}{*}{$10$} & mean &
          $\mathit{1.77}_{\pm.14}\cdot\mathit{10^{-3}}$&
          $\bm{4.00}_{\pm1.00}\cdot\bm{10^{-5}}$&
          ${2.44_{\pm 1.10}\cdot10^{-3}}$&
          ${9.64}_{\pm1.20}\cdot{10^{-4}}$&
          ${1.57_{\pm.27}\cdot10^{-3}}$
         \\
         & & & std. &
          $\mathit{2.54}_{\pm.10}\cdot\mathit{10^{-3}}$&
          ${9.71_{\pm1.40}\cdot10^{-3}}$&
          ${6.72_{\pm.91}\cdot10^{-3}}$&
          $\bm{4.50}_{\pm.30}\cdot\bm{10^{-4}}$&
          ${7.85_{\pm2.40}\cdot10^{-4}}$
         \\ \cline{2-9}
         & \multirow{2}{*}{$\!\!\!\!\star$} & \multirow{2}{*}{$25$} & mean &
          $\mathit{5.21}_{\pm.84}\cdot\mathit{10^{-4}}$&
          $\bm{1.11}_{\pm.39}\bm{\cdot10^{-4}}$&
          ${1.15_{\pm.046}\cdot10^{-3}}$&
          ${1.99_{\pm.066}\cdot10^{-2}}$&
          ${2.16_{\pm.34}\cdot10^{-2}}$
         \\
         & & & std. &
          ${4.14_{\pm.002}\cdot10^{-2}}$&
          ${3.81_{\pm.049}\cdot10^{-2}}$&
          $\bm{\mathit{1.31}}_{\pm.0030}\cdot\bm{\mathit{10^{-2}}}$&
          $\bm{1.34}_{\pm.013}\cdot\bm{10^{-2}}$&
          ${4.07_{\pm.057}\cdot10^{-2}}$
    \end{tabular}
    \label{tab:gp-inference}
\end{table*}

We investigate a Gaussian process (GP) regression task on synthetic data.
We generate the data by (i)~sampling a ground truth process from a GP prior ${\bm f \sim \GP(\bm 0, k)}$, (ii)~randomly sampling positions~$\gI$ on the domain $[0, 10]$, and (iii)~generating data from a Gaussian likelihood model ${\bm y \sim \gN(\bm{f}_\gI, 0.1\cdot\mI)}$.
We set $k$ to be a Radial Basis Function (RBF) kernel \citep{williams2006gaussian} and investigate two sets of hyperparameters ($\text{RBF}_1$ and $\text{RBF}_2$, see \Cref{sup:sec:gp-inference}).
We learn a variational distribution for IWVI, MSC, and $\text{DAIS}_0$ on the positions~$\gI$.
While this setup might sound artificial as the analytic posterior can be computed in closed form (due to the Gaussian likelihood), one can easily think of problems with non-Gaussian likelihoods or large latent spaces that can be phrased in the same framework.

\Cref{fig:gp-inference} shows two GPs that are marked with ``$\star$'' in \Cref{tab:gp-inference}. 
We train each GP with $N=16$ particles. 
$\text{DAIS}_0$ uses $K=16$ transitions.
The black line and gray area show the theoretically best posterior mean and the corresponding posterior covariance that any Gaussian mean field approach can possibly achieve.
We calculate the posterior approximation by finding the analytic posterior on~$\gI$, diagonalizing its covariance matrix, and then calculating the predictive posterior on the full range~$[0,10]$.
By visual inspection we find that the uncertainties of $\text{DAIS}_0$ are more accurate compared to IWVI and $\text{MSC}_{8\text{c}}$.
$\text{MSC}_{8\text{c}}$ visually outperforms IWVI.

\Cref{tab:gp-inference} shows mean absolute errors (MAEs) for a larger set of GP models.
We compare models with compact distributions (IWVI, $\text{DAIS}_0$, $\text{MSC}_{8\text{c}}$) and sampling based approaches (DAIS using $10^5$ and $\text{IWVI}_\text{SIR}$ using $10^3\times10^3$ samples).
Best results are written bold.
The best method with compact representation is written in italic font.
Within the methods that have compact representations, $\text{DAIS}_0$ achieves the best MAE with respect to the standard deviation in most cases (except for $\text{RBF}_2, 10$ where relative errors are a magnitude smaller in general).
In general, $\text{DAIS}_0$ performs competitively across all methods.
Our theoretical result \Cref{thm:main} might explain this, as the symmetrized KL divergence (minimized by $\text{DAIS}_0$) does not suffer from a mode-seeking behavior but learns a $q_0$ that covers more mass of the target distribution.
For the mean estimation, IWVI outperforms $\text{DAIS}_0$ while $\text{IWVI}_\text{SIR}$ performs best.
Comparing $\text{DAIS}_0$ and DAIS, we find that a compact representation indeed helps in some cases (e.g., mean in $\text{RBF}_1, d=10,25$).
$\text{MSC}_{8\text{c}}$, minimizing the forward KL divergence, performs competitively but slightly worse in comparison to the other methods.
We attribute this finding to the instability of the forward KL divergence during optimization.
We provide a larger study in \Cref{sup:sec:gp-inference}.

\subsection{Bayesian Logistic Regression} \label{sec:exp:log-reg}

We investigate a Bayesian logistic regression task with five real-world data sets \citep{gorman1988analysis,sigillito1989classification,uci_ml_repository}.
We model the Bayesian logistic regression with a factorizing standard normal prior on weights $\vw$ and bias $b$. 
The likelihood is chosen to be Bernoulli distributed $\mathrm{Bern}(r)$ with parameter $r=\text{sigmoid}(\vw^\top \vx + b)$.
Here, $\vx$ denotes a data point.
We learn a factorizing Normal variational distribution for $\vz = (\vw, b)$.

\Cref{tab:log-reg} shows MAEs between learned means and learned standard deviations and means and standard deviations of HMC samples for various methods.
$\text{DAIS}_0$ reaches the best errors for three out of five datasets with comparable mean MAE and lower MAE for the standard deviation.
$\text{DAIS}_0$ outperforms DAIS in several cases by a significant margin (e.g., \textit{sonar} and \textit{ionosphere}) in terms of standard deviation which we mainly attribute to the larger sampling complexity of DAIS (see also \Cref{sec:method:compactness}).
Similarly, when comparing IWVI and the sample-based $\text{IWVI}_\text{SIR}$, we find that IWVI (compact) outperforms the sampling based method for larger dimensional problems.
Although $\text{MSC}_{8\text{c}}$ shows errors comparable to the other methods, it falls behind on most datasets.
The results are consistent with the previous experiments and \Cref{thm:main}.
More results can be found in \Cref{sup:sec:log-reg}.

\Cref{fig:log-reg} depicts standard deviations of HMC samples (black) and the learned standard deviations of the compact methods on the \textit{ionosphere} dataset (IWVI: red, $\text{MSC}_{8\text{c}}$: yellow, $\text{DAIS}_0$: blue, increasing opacity corresponds to increasing $K \in \{2, 4, 8, 16\}$ for training).
We use $N=16$ particles for training.
We find that $\text{DAIS}_0$ outperforms IWVI and $\text{MSC}_{8\text{c}}$ for all $N$ (see also \Cref{tab:log-reg}).
For a larger $N$, the differences between IWVI and $\text{DAIS}_0$ are visually indistinct.
$\text{MSC}_{8\text{c}}$ seems to overestimate some variances.

\begin{table*}[!t]
    \vskip-1.2em
    \renewcommand{\arraystretch}{1.2}
    \small
    \centering
    \caption{
        Mean absolute error calculated between learned mean and standard deviation to HMC samples for Bayesian logistic regression ($N=16$).
        $\text{DAIS}_0$ $(16)$ outperforms other methods on $3/5$ datasets (details in \Cref{sec:exp:log-reg}).
        Results are averaged over $3$ runs.
    }
    \vskip 0.15in
    \begin{tabular}
    {l@{\;\;\;}c@{\;\;\;}l|@{\;}c@{\;\;\;\;}c@{\;\;\;\;}c@{\;\;}c@{\;\;\;\;}c@{\;\;\;\;}c@{}}
        & $d$ & MAE &
        IWVI & $\text{IWVI}_\text{SIR}$ &
        $\text{DAIS}_0$ ($16$) & DAIS ($16$) & $\text{MSC}_{8\text{c}}$
        \\
        \shline
        \multirow{2}{*}{\!\!\!\!\textit{sonar}}
         & \multirow{2}{*}{\textit{$61$}} & mean &
          $\bm{\mathit{8.66}}_{\pm.075}\cdot\bm{\mathit{10^{-2}}}$&
          ${8.86_{\pm.17}\cdot10^{-2}}$&
          $\bm{\mathit{8.58}}_{\pm.19}\cdot\bm{\mathit{10^{-2}}}$&
          ${1.10_{\pm.089}\cdot10^{-1}}$&
          ${1.33_{\pm.081}\cdot10^{-1}}$
         \\
         && std. &
          ${7.95_{\pm.089}\cdot10^{-2}}$&
          ${8.26_{\pm.071}\cdot10^{-2}}$&
          $\bm{\mathit{4.27}}_{\pm.13}\cdot\bm{\mathit{10^{-2}}}$&
          ${6.23_{\pm.13}\cdot10^{-2}}$&
          ${7.26_{\pm.16}\cdot10^{-2}}$
         \\
         \hline
        \multirow{2}{*}{\!\!\!\!\textit{ionosphere}}
         & \multirow{2}{*}{\textit{$35$}} & mean &
          $\bm{\mathit{4.39}}_{\pm.098}\cdot\bm{\mathit{10^{-2}}}$&
          $7.18_{\pm.14}\cdot10^{-2}$&
          $\bm{\mathit{4.34}}_{\pm.023}\cdot\bm{\mathit{10^{-2}}}$&
          ${1.06_{\pm.10}\cdot10^{-1}}$&
          ${2.14_{\pm.050}\cdot10^{-1}}$
         \\
         && std. &
          ${4.73_{\pm.048}\cdot10^{-2}}$&
          ${8.34_{\pm.070}\cdot10^{-2}}$&
          $\bm{\mathit{3.25}}_{\pm.36}\cdot\bm{\mathit{10^{-2}}}$&
          ${7.64_{\pm.49}\cdot10^{-2}}$&
          ${8.55_{\pm.024}\cdot10^{-2}}$
        \\
         \hline
        \multirow{2}{*}{\!\!\!\!\textit{heart disease}}
         & \multirow{2}{*}{\textit{$16$}} & mean &
          $\bm{\mathit{2.16}}_{\pm.23}\cdot\bm{\mathit{10^{-2}}}$&
          $\bm{2.30}_{\pm.15}\cdot\bm{10^{-2}}$&
          $\bm{\mathit{2.26}}_{\pm.26}\cdot\bm{\mathit{10^{-2}}}$&
          $\bm{2.22}_{\pm.23}\cdot\bm{10^{-2}}$&
          ${3.37_{\pm.032}\cdot10^{-1}}$
         \\
         && std. &
          ${2.59_{\pm.072}\cdot10^{-2}}$&
          ${2.67_{\pm.060}\cdot10^{-2}}$&
          $\bm{\mathit{1.08}}_{\pm.28}\cdot\bm{\mathit{10^{-2}}}$&
          $\bm{1.18}_{\pm.29}\cdot\bm{10^{-2}}$&
          ${2.81_{\pm.10}\cdot10^{-2}}$
        \\
         \hline
        \multirow{2}{*}{\!\!\!\!\textit{heart attack}}
         & \multirow{2}{*}{\textit{$14$}} & mean &
          ${5.05_{\pm.75}\cdot10^{-2}}$&
          $\bm{4.81}_{\pm.57}\cdot10^{-2}$&
          $\bm{\mathit{4.68}}_{\pm.16}\cdot\bm{\mathit{10^{-2}}}$&
          $\bm{4.67}_{\pm.067}\cdot\bm{10^{-2}}$&
          ${1.75_{\pm.011}\cdot10^{-1}}$
         \\
         && std. &
          $\bm{\mathit{2.80}}_{\pm.15}\cdot\bm{\mathit{10^{-2}}}$&
          ${3.11_{\pm.14}\cdot10^{-2}}$&
          ${3.35_{\pm.11}\cdot10^{-2}}$&
          ${3.28_{\pm.094}\cdot10^{-2}}$&
          ${7.41_{\pm.35}\cdot10^{-2}}$
        \\
         \hline
        \multirow{2}{*}{\!\!\!\!\textit{loan}}
         & \multirow{2}{*}{\textit{$12$}} & mean &
          $\bm{\mathit{1.67}}_{\pm.76}\cdot\bm{\mathit{10^{-2}}}$&
          $\bm{2.38}_{\pm.47}\cdot\bm{10^{-2}}$&
          $\bm{\mathit{2.02}}_{\pm0.079}\cdot\bm{\mathit{10^{-2}}}$&
          $\bm{2.06}_{\pm.26}\cdot\bm{10^{-2}}$&
          ${1.76_{\pm.0089}\cdot10^{-1}}$
         \\
         && std. &
          $\bm{\mathit{9.23}}_{\pm.68}\cdot\bm{\mathit{10^{-3}}}$&
          ${1.14_{\pm.077}\cdot10^{-2}}$&
          ${1.28_{\pm.060}\cdot10^{-2}}$&
          ${1.32_{\pm.13}\cdot10^{-2}}$&
          ${5.08_{\pm.16}\cdot10^{-2}}$
      \end{tabular}
      \label{tab:log-reg}
\end{table*}

\section{Conclusion} \label{sec:conclusion}

In this work, we investigate the initial distribution of DAIS.
We show theoretically that, for many transition steps, it minimizes the symmetrized KL divergence to the target distribution.
Therefore, by using $q_0$ as an approximate posterior (a method that we call $\text{DAIS}_0$), we expect it to be less prone to underestimating variances (compared to VI) and easier to optimize in higher-dimensional settings (compared to MSC).
Additionally, we argue that $q_0$ is an explicit and compact representation of the target distribution, which provides advantages over sampling based methods for various downstream tasks.
In experiments on both synthetic and real-world data, we verify that $\text{DAIS}_0$ indeed often fulfills our expectations by finding distributions with more accurate variances in higher dimensions (compared to other compact and sampling based methods).
While $\text{DAIS}_0$ is more expensive than VI at training time, inference with $\text{DAIS}_0$ is just as expensive as VI but it is significantly cheaper than DAIS.

\section*{Impact Statement}
This paper presents work whose goal is to advance the field of Machine Learning. There are many potential societal consequences of our work, none which we feel must be specifically highlighted here.

\section*{Software and Data}
The datasets that we run experiments on are all either publicly available or can be generated by code.
We release the code to run all experiments at \url{https://github.com/jzenn/DAIS0}.

\section*{Acknowledgements}
Johannes Zenn would like to thank Tim Z. Xiao, Marvin Pförtner, and Tristan Cinquin for helpful discussions.
The authors would like to thank the anonymous reviewers for helpful comments and pointing out related work concerning the theoretical statement in the paper.
Funded by the Deutsche Forschungsgemeinschaft (DFG, German Research Foundation) under Germany’s Excellence Strategy – EXC number 2064/1 – Project number 390727645. 
This work was supported by the German Federal Ministry of Education and Research (BMBF): Tübingen AI Center, FKZ: 01IS18039A.
Robert Bamler acknowledges funding by the German Research Foundation (DFG) for project 448588364 of the Emmy Noether Programme.
The authors would like to acknowledge support of the `Training Center Machine Learning, Tübingen' with grant number 01|S17054.
The authors thank the International Max Planck Research School for Intelligent Systems (IMPRS-IS) for supporting Johannes Zenn.

\bibliography{example_paper}
\bibliographystyle{icml2024}

\newpage
\appendix
\onecolumn

\section{Proof of the Main Theoretical Result} \label{sup:sec:thm-main-proof}

We prove \Cref{thm:main} of the main paper.
For the reader's convenience, we restate the theorem below.

\theoremstyle{plain}
\newtheorem*{maintheorem}{\Cref*{thm:main}}

\begin{maintheorem} 
    We assume that $\operatorname{support}(q_0) \supset \operatorname{support}(f)$, perfect transitions between two annealing distributions and equally spaced $\beta_k$, i.e., $\beta_k = k/K$.
    Then, for large~$K$ and $N=1$, the gap $\Delta_\textup{AIS}^{1,K} \coloneqq \log Z - \mathrm{ELBO}_\textup{AIS}^{1,K}(f_\textup{AIS}, q_\textup{AIS})$ is a divergence between the initial distribution $q_0$ and the target distribution $f/Z$.
    Up to corrections of order $\gO(1/K^3)$, this divergence is proportional to the symmetrized KL divergence,
    \begin{align}
        \Delta_\textup{AIS}^{1,K}
        =
        \frac{1}{K}
        \left(
            \frac12 \KL\bigl(f(\vz)/Z \,\big\|\, q_0(\vz)
            \bigr)
            + 
            \frac12 \KL\bigl(q_0(\vz) \,\big\|\, f(\vz)/Z \bigr)
        \right)
        + \gO\bigl(1/K^3\bigr).
        \label{sup:eq:thm-main}
    \end{align}
\end{maintheorem}
\begin{proof}
    We start from \Cref{eq:elbo-dais} and~\Cref{eq:weight-ais-simplified} of the main text, which we reproduce here for convenience,
    \begin{align}
        \log Z &\geq
        \E_{\vz_{0:K}^{(1:N)} \sim q_\text{AIS}}
        \left[
                \log
                    \left(
                    w_\text{AIS}\bigl(\vz_{0:K}^{(1:N)}\bigr)
                \right)
        \right] 
        \eqqcolon \text{ELBO}_\text{AIS}^{N,K}(f_\text{AIS}, q_\text{AIS})
    \end{align}
    and 
    \begin{align}
        w_\text{AIS}^{N,K}\!\bigl(\vz_{0:K}^{(1:N)}\bigr)
        = \frac1N \sum_{i=1}^N \prod_{k=1}^K
        \frac{\gamma_{\beta_k}\bigl(\vz_{k-1}^{(i)}\bigr)}{\gamma_{\beta_{k-1}}\bigl(\vz_{k-1}^{(i)}\bigr)}
        \label{sup:eq:weight-ais-simplified}
    \end{align}
    \Cref{sup:eq:weight-ais-simplified} holds for DAIS if we assume perfect transitions \citep{grosse2013annealing}.
    Inserting $\gamma_{\beta_k}(\vz) = Z_{\beta_k} \pi_{\beta_k}(\vz)$, $Z_{\beta_K}=Z_1=Z$, and $Z_{\beta_0}=Z_0=1$, we obtain for $N=1$ \citep{grosse2013annealing},
    \begin{align}\label{sup:eq:Z-gap}
        \Delta_\text{AIS}^{1,K}
        = \log Z - \E_{\vz_{0:K} \sim q_\text{AIS}}
        \left[
            \log Z_{\beta_K} - \log Z_{\beta_0} +
            \sum_{k=1}^K
                \log \frac{\pi_{\beta_k}(\vz_{k-1})}{\pi_{\beta_{k-1}}(\vz_{k-1})}
        \right]
        = \sum_{k=1}^{K} \KL(\pi_{\beta_{k-1}} \,\|\, \pi_{\beta_{k}}).
    \end{align}
    As stated in the main text, $\Delta_\text{AIS}^{1,K}$ is a divergence since it is a sum of divergences and since $\Delta_\text{AIS}^{1,K}=0$ for $q_0=f/Z$ as, in this case, $\pi_{\beta_k}=q_0 \;\forall k\in\{0,\ldots,K\}$ since, by definition,
    \begin{align} \label{sup:eq:def-pi}
        \pi_{\beta_k}(\vz) \coloneqq \frac{\gamma_{\beta_k}(\vz)}{Z_{\beta_k}}
        \qquad\text{with}\qquad
        \gamma_{\beta_k}(\vz) \coloneqq q_0(\vz) \left(\frac{f(\vz)}{q_0(\vz)}\right)^{\!\beta_k}
        \qquad\text{and}\qquad
        Z_{\beta_k} = \int\! \gamma_{\beta_k}(\vz)\,\mathrm{d}\vz.
    \end{align}
    
    We now show \Cref{sup:eq:thm-main}.
    For each term on the right-hand side of \Cref{sup:eq:Z-gap}, we have 
    \begin{align}\label{sup:eq:def-h}
        \KL(\pi_{\beta_{k-1}} \,\|\, \pi_{\beta_{k}}) = h_{k-1}(\beta_k) - h_{k-1}(\beta_{k-1}), 
        \qquad\text{where}\qquad 
        h_{k-1}(\eta) \coloneqq -\E_{\pi_{\beta_{k-1}}(\vz)}[\log \pi_{\eta}(\vz)].
    \end{align}

    Using Taylor's theorem, we can express
    \begin{align}
        h_{k-1}(\beta_k) - h_{k-1}(\beta_{k-1})
        =
        (\beta_k - \beta_{k-1}) h'_{k-1}(\beta_k) + R_1(\beta_k),
        \label{sup:eq:taylor-expansion}
    \end{align}
    where primes denote derivatives.
    We use the Lagrange from for the remainder
    \begin{align}
        R_1(\beta_k) 
        =
        \frac{1}{2}
        (\beta_k - \beta_{k-1})^2
        h''_{k-1}(\eta_{k-1}),
    \end{align}
    for some $\eta_{k-1}\in [\beta_{k-1},\beta_k]$.
    
    We can show that the first term of the right-hand side of \Cref{sup:eq:taylor-expansion} is zero by
    \begin{align}\label{sup:eq:first-derivative-zero}
        \begin{split}
        h_{k-1}'(\beta_{k-1})
        &=
        -\nabla_\eta
        \left(\E_{\pi_{\beta_{k-1}}(\vz)}[\log \pi_{\eta}(\vz)]\right)_{\!\eta=\beta_{k-1}}
        \\
        &=
        -\int\! \pi_{\beta_{k-1}}(\vz) \frac{\nabla_\eta \left(\pi_{\eta}(\vz)\right)_{\eta=\beta_{k-1}}}{\pi_{\beta_{k-1}}(\vz)}\,\mathrm{d}\vz
        =
        -\nabla_\eta\left( \int\! \pi_{\eta}(\vz) \,\mathrm{d}\vz\right)_{\!\!\!\;\eta=\beta_{k-1}}
        =
        -\nabla_\eta(1) = 0.
        \end{split}
    \end{align}
    Therefore, only the second term on the right-hand side of \Cref{sup:eq:taylor-expansion} contributes.

    From \Cref{sup:eq:def-pi}, we combine the leftmost and the middle equation and arrive at
    \begin{align}\label{sup:eq:combine-unnormalized-normalized}
        \log \pi_\eta(\vz) = \log q_0(\vz) - \eta \bigl(\log f(\vz) - \log q_0(\vz)\bigr) - \log Z_\eta.
    \end{align}
    Inserting \Cref{sup:eq:combine-unnormalized-normalized} into \Cref{sup:eq:def-h} and taking the second derivative we get
    \begin{align}\label{sup:eq:second-derivative}
        h_{k-1}''(\eta_{k-1}) = \partial^2 \log Z_{\eta_{k-1}} \,/\, \partial \eta_{k-1}^2.
    \end{align}

    Inserting \Cref{sup:eq:first-derivative-zero} and \Cref{sup:eq:second-derivative} into \Cref{sup:eq:taylor-expansion} and the result into \Cref{sup:eq:Z-gap}, we find 
    \begin{align}
        \Delta_\text{AIS}^{1,K} &= 
        \sum_{k=1}^{K} \KL(\pi_{\beta_{k-1}} \,\|\, \pi_{\beta_{k}})
        =
        \frac{1}{2K}\Biggl(\frac1K \sum_{k=1}^{K}\frac{\partial^2 \log Z_{\eta_{k-1}}}{\partial \eta_{k-1}^2} \Biggr).
    \end{align}
    Here, the term in the parentheses approximates an integral (since $\eta_{k-1}\in[\beta_{k-1},\beta_k] = \bigl[\frac{k-1}{K},\frac{k}{K}\bigr]\;\forall k$).
    Using that the error for such numerical integration scales as $\gO(1/K^2)$, we thus find
    \begin{align}
        \Delta_\text{AIS}^{1,K}
        &=\frac{1}{2K} 
        \left(
        \int_0^1
        \frac{\partial^2\log Z_{\eta}}{\partial^2 \eta}
        \,\mathrm{d}\eta + \gO\bigl(1/K^2\bigr)
        \right)
        \label{sup:eq:reverse-taylor}
        \\
        &=
            \frac{1}{2K} \left(
                \left(\frac{\partial \log Z_\eta}{\partial\eta}\right)_{\!\eta=1}
                -\left(\frac{\partial \log Z_\eta}{\partial\eta}\right)_{\!\eta=0}
            \right)
            + \gO\bigl(1/K^3\bigr).
        \label{sup:eq:gap-from-boundary-terms}
    \end{align}
    
    Thus, for large~$K$, only the boundary terms remain.
    We calculate them by inserting the definitions of~$Z_\eta$ from \Cref{eq:def-pi} and using the relation ${\partial x^\eta/\partial\eta} = x^\eta \log x$.
    We find
    \begin{align}
        \frac{\partial \log Z_\eta}{\partial\eta}
        &=
        \frac{1}{Z_\eta}
        \int
        q_0(\vz) 
        \left(
            \frac{f(\vz)}{q_0(\vz)}
        \right)^{\!\eta}
        \log \frac{f(\vz)}{q_0(\vz)}
        \,\mathrm{d}\vz.
         \label{sup:eq:eta-0-without-log}
    \end{align}
    Thus, recalling that $Z_0=1$ and $Z_1=Z$ are the normalization constants of $q_0$ and~$f$, respectively,
    \begin{align}
        \left(\frac{\partial \log Z_\eta}{\partial\eta}\right)_{\!\eta=1} &= \E_{\vz\sim f(\vz)/Z}\left[\log\frac{f(\vz)}{q_0(\vz)}\right] = \KL\bigl(f(\vz)/Z \,\big\|\, q_0(\vz)\bigr); \label{sup:eq:kl-at-eta1}
        \\
        -\left(\frac{\partial \log Z_\eta}{\partial\eta}\right)_{\!\eta=0} &= \E_{\vz\sim q_0(\vz)}\left[\log\frac{q_0(\vz)}{f(\vz)}\right] = \KL\bigl(q_0(\vz) \,\big\|\, f(\vz)/Z \bigr).\label{sup:eq:kl-at-eta0}
    \end{align}
    Inserting \Cref{sup:eq:kl-at-eta1,sup:eq:kl-at-eta0} into \Cref{sup:eq:gap-from-boundary-terms} leads to the claim in \Cref{sup:eq:thm-main}.
\end{proof}

\paragraph{Statement for $N>1$.} \label{par:statement-n-greater-1}
Starting from \Cref{sup:eq:weight-ais-simplified} we obtain for the gap with general $N$,
\begin{align}
    \begin{split}
        \Delta_\mathrm{AIS}^{N,K} = 
        -\E_{\vz_{0:K}^{(1:N)} \sim q_\text{AIS}}
        \left[\log \left( 
        \frac{1}{N} 
        \sum^N_{i=1} 
        \prod_{k=1}^K 
        \frac{\gamma_{\beta_k}\bigl(\vz_{k-1}^{(i)}\bigr)}
        {\gamma_{\beta_{k-1}}\bigl(\vz_{k-1}^{(i)}\bigr)} 
        \right)\right].
    \end{split}
\end{align}
Due to the sum over $N$, the right-hand side can now no longer be separated into a sum of $K$ terms. 
However, we can still bound the right-hand side by pulling the sum out of the logarithm using Jensen’s inequality, resulting in 
\begin{align}
    \Delta_\text{AIS}^{N,K} 
    \leq 
    \Delta_\text{AIS}^{1,K},
\end{align}
which implies
\begin{align}
    \lim_{K\to\infty} \Delta_\text{AIS}^{N,K} 
    \leq 
    \frac 1K
    D_\text{JS}(q_0, f/Z)
    + \gO\bigl(1/K^3\bigr),
\end{align}
where $D_\text{JS}$ denotes the symmetrized KL divergence.
This shows that the bound for $N>1$ can be upper-bounded by the symmetrized KL divergence but does not give additional insights on, e.g., symmetry. 
For $N=1$ the inequality is an equality.

\section{Multidimensional Bimodal Target Distributions} \label{sup:sec:d-dim-blob} 

We provide further details for the experiment on multidimensional bimodal Gaussian target distributions.
The experiment is discussed in \Cref{sec:exp:d-dim-blobs} of the main text.
We use an Intel XEON CPU E5-2650 v4 for running the small-scale experiments and a single NVIDIA GeForce GTX 1080 Ti for the large-scale experiments.

The model $p$ is defined as follows
\begin{align}
    p(\vz) = 
    \frac{1}{2}\,\gN(\vz; \bm 0, 0.25^2\mI) + 
    \frac{1}{2}\,\gN(\vz; \bm 1, 0.25^2\mI).
\end{align}
We initialize the variational distributions of VI, IWVI and $\text{DAIS}_0$ with 
\begin{align}
    q_\star(\vz) = \gN\left(\vz; \frac12 \bm{1}, \mI\right),
\end{align}
where the mean and the diagonal of the covariance matrix are learnable parameters.
We train the models for $7,500$ iterations with the Adam optimizer \citep{kingma2014adam} and a learning rate of $10^{-2}$.

DAIS utilizes the parameterization of \citet{zhang2021differentiable}.
In addition to the parameters of $q_0$, we learn the scale $c$ of the mass matrix $\mM = c\mI$, the annealing schedule ($\beta_1, \dots, \beta_{K-1}$), and the step widths of the sampler.

MSC is trained for a comparable number of iterations with a learning rate of $10^{-4}$.

\paragraph{Classification Into Mode-Seeking ``s'' and Mass-Covering ``c'' and Undecidable ``u''.}
To get a first classification on ``c'' or ``s'', we measure the distance of the mean of the variational distribution to both modes of the target distribution, and to their mid point $(0.5,\ldots,0.5)$. 
It turns out that this criterion clearly identifies in almost all cases whether a distribution is mode-seeking (see \Cref{sup:fig:histogram-differences}). 
For cases where we do not clearly identify whether a distribution is mass-covering or mode-seeking, we use the letter ``u'', short for ``undecidable''.
Afterwards, we manually verified the classifications by making a plot similar to \Cref{fig:density-on-diagonal} for each ``square'' of \Cref{fig:n-dim-gaussian-blobs}.

\begin{figure*}[h]
    \centering
    \includegraphics[width=\textwidth]{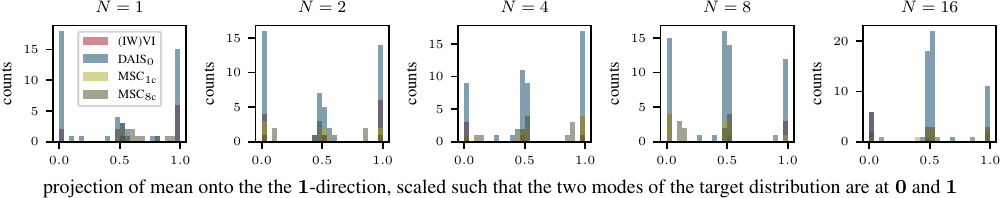}
    \caption{
        Distance of the mean of the variational distribution to both modes of $p$ and its mid point.
        Distances are mostly $0.0$, $1.0$, or $0.5$ for (IW)VI, $\text{DAIS}_0$, and $\text{MSC}_{1\text{c}}$  indicating that the mean of the variational distribution falls on one of the two modes or the mid point.
        For $\text{MSC}_{8\text{c}}$ there are more outliers.
    }
    \label{sup:fig:histogram-differences}
\end{figure*}

\clearpage
\section{Gaussian Process Regression} \label{sup:sec:gp-inference}

We provide further details on model, the joint distribution we plot in the main text, and quantitative results of the GP regression experiment on synthetic data discussed in \Cref{sec:exp:gp-inference} of the main text.
We use an Intel XEON CPU E5-2650 v4 for running the small-scale experiments and a single NVIDIA GeForce GTX 1080 Ti for the large-scale experiments.

\subsection{Model}

The kernels, called $\text{RBF}_1$ and $\text{RBF}_2$ in the main text, are instances of the following RBF kernel with different lengthscales $\rho_1 = 0.8$ and $\rho_2 = 3.0$ and 
\begin{align}
    k(t, s) =
    \exp\left(
        - \frac{(t - s)^2}{2\rho^2}
    \right).
\end{align}
We discretize the domain $[0, 10]$ on $75$ points.
We initialize the variational distributions of VI, IWVI and $\text{DAIS}_0$ with standard normal distributions.

All models are trained for $50,000$ iterations with the Adam optimizer and a learning rate of $10^{-3}$.
DAIS utilizes the parameterization of \citet{zhang2021differentiable}.
In addition to the parameters of $q_0$, we learn the diagonal $\vd$ of the mass matrix $\mM = \mathrm{diag}(\vd)$, the annealing schedule ($\beta_1, \dots, \beta_{K-1}$), and the step widths of the sampler.

\subsection{Joint Distribution}

This section provides more background on how we produce the plots of the GP regression experiment in the main text.

Let $\vf$ denote the latent process of interest.
Let $\vf_o$ denote the part of $\vf$ that we have data on and let $\vf_u$ denote the remaining part.
The prior on $\vf$ can then be written as follows.
\begin{align}
    p(\vf) &= 
    p\left(
        \begin{pmatrix}
            \vf_o \\ \vf_u
        \end{pmatrix}
    \right)
    = 
    \gN\left(
        \begin{pmatrix}
            \vf_o \\ \vf_u
        \end{pmatrix};
        \begin{pmatrix}
            \vm_o \\ \vm_u
        \end{pmatrix},
        \begin{pmatrix}
            \vS_{o,o} & \vS_{o,u} \\ 
            \vS_{u,o} &  \vS_{u,u}
        \end{pmatrix}
    \right) \\
    p(\vf_o) p(\vf_u \mid \vf_o) 
    &= \gN(\vf_o; \vm_o, \vS_{o,o})
    \gN(\vf_u; \vm_u + \vS_{u,o} \vS_{o,o}^{-1}(\vf_o - \vm_o), \vS_{u,u} - \vS_{u,o}\vS_{o,o}^{-1}\vS_{o,u})
\end{align}
We model the data $\vy$ with a Gaussian likelihood model with fixed variance $\sigma^2 = 0.1$.
\begin{align}
    p(\vf_o \mid \vy) = \gN(\vy; {f_o}, \sigma^2\mI)
\end{align}
If we condition on data $\vy$ we compute the joint as follows.
\begin{align}
    p(\vf \mid \vy) &=
    p(\vf_o \mid \vy) p(\vf_u \mid \vf_o, \vy) =
    p(\vf_o \mid \vy) p(\vf_u \mid \vf_o) \\
    \begin{split}
    &= \gN\left(
        \begin{pmatrix}
            \vf_o \\ \vf_u
        \end{pmatrix};
        \begin{pmatrix}
            \vm^+ \\ \vS_{u,o}\vS_{o,o}^{-1}\vm_u^+ + \vS_{u,o}\vS_{o,o}^{-1}\vm_o + \vm_u
        \end{pmatrix},
        \right.\\
        &\qquad\qquad\qquad
        \left.
        \begin{pmatrix}
            \vS^+ & \vS^+(\vS_{u,o}\vS_{o,o}^{-1})^\top \\ 
            \vS_{u,o}\vS_{o,o}^{-1}\vS^+ & 
            (\vS_{u,o}\vS_{o,o}^{-1}\vS^+(\vS_{u,o}\vS_{o,o}^{-1})^\top + \vS_{u,u} - \vS_{u,o}\vS_{o,o}^{-1}\vS_{o,u}
        \end{pmatrix}
    \right)
    \end{split} \label{sup:eq:joint}
\end{align}
where $\vm^+$ and $\vS^+$ are either (a) the analytically inferred mean and variance of the posterior Gaussian given the data or (b) the learned variational approximation given data.

For the first case, we get
\begin{align}
    \vm^+ &= \vm_o + \vS_{o,o} (\vS_{o,o} + \sigma^2 \mI)^{-1} (\vx - \vm_o) \quad\text{and} \\
    \vS^+ &= \vS_{o,o} - \vS_{o,o}(\vS_{o,o} + \sigma^2 \mI)^{-1}\vS_{o,o}.
\end{align}
In the second case, we model the Gaussian distribution by a mean vector and a variance vector
\begin{align}
    \vm^+ = \check{\vm} \quad\text{and}\quad
    \vS^+ = \mathrm{diag}(\check{\vs}).
\end{align}

We plot the shaded region by calculating \Cref{sup:eq:joint} with
\begin{align}
    \vS^+ = \mathrm{diag}(\mathrm{diag}(\vS_{o,o} - \vS_{o,o}(\vS_{o,o} + \sigma^2 \mI)^{-1}\vS_{o,o})).
\end{align}

\subsection{Complementary Quantitative Results}

\Cref{sup:tab:gp-inference-0}, \Cref{sup:tab:gp-inference-0A}, \Cref{sup:tab:gp-inference-1}, \Cref{sup:tab:gp-inference-1A}, \Cref{sup:tab:gp-inference-2}, \Cref{sup:tab:gp-inference-2A}, \Cref{sup:tab:gp-inference-3}, \Cref{sup:tab:gp-inference-3A}, \Cref{sup:tab:gp-inference-4},  \Cref{sup:tab:gp-inference-4-A} provide additional results on our the experiment.

\begin{table}[!ht]
    \renewcommand{\arraystretch}{1.4}
    \tiny
    \centering
    \caption{
        MAE (compared to analytic solution) of mean and standard deviation of GP regression.
        More details are provided in \Cref{sec:exp:gp-inference}.
        Results are averaged over 3 runs.
        $\text{RBF}_{\{1,2\}}$: different prior parameters.
    }

    \label{sup:tab:gp-inference-4-A}
\end{table}

\clearpage
\section{Bayesian Logistic Regression} \label{sup:sec:log-reg}

We provide further details on model, the sampling procedure, and quantitative results of the logistic regression experiments on the five datasets considered.
The experiment is discussed in \Cref{sec:exp:log-reg} of the main text.
We use an Intel XEON CPU E5-2650 v4 for running the small-scale experiments and a single NVIDIA GeForce GTX 1080 Ti for the large-scale experiments.

\subsection{Model}

We train our models for $100,000$ iterations on full-batch gradients.
We use the Adam optimizer with a learning rate of $10^{-3}$.
DAIS utilizes the parameterization of \citet{zhang2021differentiable}.
In addition to the parameters of $q_0$, we learn the diagonal $\vd$ of the mass matrix $\mM = \mathrm{diag}(\vd)$, the annealing schedule ($\beta_1, \dots, \beta_{K-1}$), and the step widths of the sampler.

\subsection{Sampling}

We sample the model described in the main text with HMC and compare the learned variational distributions of VI, IWVI, and $\text{DAIS}_0$ to the samples (that we treat as ``ground truth'').
We use the leapfrog integrator with an identity mass matrix for $n_l$ steps and a step size of $\epsilon_{\text{HMC}}$, $n_b$ burn-in steps, take every $n_e$-th sample and sample $n_t$ in total.
We apply MH correction steps. 
\Cref{sup:tab:hmc-sampling} shows the corresponding hyperparameters.

\begin{table}[ht]
    \renewcommand{\arraystretch}{1.4}
    \small
    \centering
    \caption{
        Hyperparameters of HMC sampling for the \textit{sonar} dataset and the \textit{ionosphere} dataset.
    }
    \begin{tabular}{l|ccccc}
        & $\epsilon_\text{HMC}$
        & $n_l$
        & $n_b$
        & $n_e$
        & $n_t$
        \\
        \shline
        \textit{sonar}
        & $0.001$
        & $50$
        & $10,000$
        & $10$
        & $10,000$
        \\
        \textit{ionosphere}
        & $0.001$
        & $50$
        & $10,000$
        & $5$
        & $10,000$
    \end{tabular}
    \label{sup:tab:hmc-sampling}
\end{table}

\subsection{Complementary Quantitative Results}

\Cref{sup:tab:log-reg-0}, \Cref{sup:tab:log-reg-0A}, \Cref{sup:tab:log-reg-1}, \Cref{sup:tab:log-reg-1A}, \Cref{sup:tab:log-reg-2}, \Cref{sup:tab:log-reg-2A}, \Cref{sup:tab:log-reg-3}, \Cref{sup:tab:log-reg-3A}, and \Cref{sup:tab:log-reg-4} provide additional results on the experiment.

\begin{table}[ht]
    \renewcommand{\arraystretch}{1.4}
    \tiny
    \centering
    \caption{
        MAE calculated between learned mean and standard deviation to HMC samples for Bayesian logistic regression.
        Details can be found in \Cref{sec:exp:log-reg}.
        Results are averaged over $3$ runs.
    }

    \label{sup:tab:log-reg-5}
\end{table}

\end{document}